\documentclass[conference]{IEEEtran}
\IEEEoverridecommandlockouts
\usepackage{cite}
\usepackage{amsmath,amssymb,amsfonts}
\usepackage{graphicx}
\usepackage{textcomp}
\usepackage{xcolor}
\def\BibTeX{{\rm B\kern-.05em{\sc i\kern-.025em b}\kern-.08em
    T\kern-.1667em\lower.7ex\hbox{E}\kern-.125emX}}

\usepackage{soul} % For \hl and \sethlcolor
\usepackage{xcolor} % For color definitions
\usepackage[colorlinks,linkcolor=black,citecolor=black,urlcolor=black,bookmarks=false,hypertexnames=true]{hyperref}
\usepackage{todonotes}

\newcommand{\thetermlong}{Synergistic Simplex\xspace}%
\newcommand{\theterm}{$\mathcal{SS}$\xspace}%

\newcommand{\ps}{$\mathcal{PS}$\xspace}%

\usepackage{xspace}
\newcommand{\eg}{{\it e.g.,}\xspace}

\newcommand{\etal}{{\it et~al.}\xspace}
\newcommand{\ie}{{\it i.e.}\xspace}

\usepackage{hyperref}

\usepackage{cleveref}
\crefname{assumption}{assumption}{assumptions}
\Crefname{assumption}{Assumption}{Assumptions}
\crefname{theorem}{theorem}{theorems}
\Crefname{theorem}{Theorem}{Theorems}
\crefname{definition}{definition}{definitions}
\Crefname{definition}{Definition}{Definitions}

\usepackage{algorithm}
\usepackage[noend]{algpseudocode} % for \Function, \EndFunction, etc.

\usepackage{amsthm}
\newtheorem{assumption}{Assumption}
\newtheorem{theorem}{Theorem}
\newtheorem{definition}{Definition}

\usepackage{subfig}

\usepackage{tabularx}
\usepackage{array}    % for m{<width>} column types
\usepackage{multirow} % for merged cells

\definecolor{deepgreen}{RGB}{0,128,0}     % dark, rich green
\definecolor{deepyellow}{RGB}{204,153,0}  % muted gold / deep yellow
\definecolor{deepred}{RGB}{153,0,0}       % strong dark red

\usepackage{pifont}
\usepackage{booktabs}

\begin{document}

\title{Synergistic Simplex: Cooperative Runtime Assurance for Safety-Critical Autonomous Systems\\
\thanks{
\textsuperscript{$\star$}Equal contribution.

This material is based upon work supported by
the National Aeronautics and Space Administration (NASA) under the cooperative agreement 80NSSC20M0229 and University Leadership Initiative grant no. 80NSSC22M0070, and
the National Science Foundation (NSF) under grant no. CNS 1932529 and ECCS 2311085.
Any opinions, findings, conclusions or recommendations expressed in this material
are those of the authors and do not necessarily reflect
the views of the sponsors.
}
}

\author{
\IEEEauthorblockN{
Ayoosh Bansal\textsuperscript{$\star$,1}\quad
Mikael Yeghiazaryan\textsuperscript{$\star$,1}\quad
Artyom Khachatryan\textsuperscript{2}\quad
Tianyi Zhu\textsuperscript{3}\\
Hunmin Kim\textsuperscript{4}\quad
Naira Hovakimyan\textsuperscript{1}\quad
Lui Sha\textsuperscript{1}
}

\IEEEauthorblockA{
\textsuperscript{1}University of Illinois Urbana-Champaign (UIUC), Urbana, IL, USA\\
\textsuperscript{2}Center for Scientific Innovation and Education (CSIE), Yerevan, Armenia \\
\textsuperscript{3}California Institute of Technology (Caltech), Pasadena, CA, USA \\
\textsuperscript{4}Mercer University, Macon, GA, USA
}
}

\maketitle

\begin{abstract}

Autonomous systems increasingly rely on machine-learning (ML) components for safety-critical tasks such as perception and control in autonomous vehicles (AVs).
While ML enables essential capabilities, it inevitably exhibits long-tail faults that make it unsuitable for safety-critical tasks.
Runtime assurance (RTA) mitigates this issue by pairing ML components with verifiable safety monitors, \eg Control Simplex and Perception Simplex architectures.
However, the limited performance of safety monitors remains a major bottleneck.

The \thetermlong (\theterm) architecture improves system performance by enabling bidirectional integration between ML components and safety monitors while preserving formal safety guarantees.
The key innovation here is allowing safety monitors to use ML outputs, which is typically prohibited in RTA systems.
We formally derive conditions under which this integration preserves safety and demonstrate the performance benefits.
We present the design, analysis, and evaluation of \theterm for AV obstacle detection.

% 144 words

\end{abstract}

\begin{IEEEkeywords}
Autonomous Systems, Safety-Critical Systems, Runtime Assurance, Machine Learning, Cyber-Physical Systems.
\end{IEEEkeywords}

\section{Introduction}
\label{sec:introduction}

Autonomous systems are advancing rapidly, driven by breakthroughs in machine learning (ML) enabling complex autonomy tasks.
Among these systems, autonomous vehicles (AVs) exemplify the reliance on ML for functions essential to the safety goals of the vehicle, \eg perception.
Despite their enormous potential, such ML-enabled systems pose significant safety challenges, as failures in ML components deployed for safety-critical tasks can have severe consequences.

Although ML-based components achieve impressive empirical performance, they
are inherently unverifiable and therefore susceptible to unexpected faults~\cite{pereira2020challenges,goodloe2022assuring,malleswaran2023challenges,zhang2023deep,braiek2025machine}.
Su~\etal~\cite{su2019one} show that even a single-pixel perturbation, imperceptible to humans, can drastically change ML output.
Such fragility raises serious concerns about the safety of autonomous systems built upon fundamentally unreliable ML components~\cite{
jenn2020identifying, 
xu2021machine,
mohseni2022taxonomy,
araujo2024road,
paterson2025safety,
crash_ntsb_tesla_2016_5_7,
crash_ntsb_uber_2018_3_18,
% crash_ntsb_tesla_2018_3_23,
crash_ntsb_tesla_2019_3_1,
NHTSA_SGO_Reports
}.

A solution to this is the runtime assurance (RTA) design, where a simple verifiable component monitors and constrains a complex high-performance component, ensuring the system remains within a safe operating region.
This idea was pioneered by Sha as the Simplex architecture for control systems~\cite{simplex_original}.
Since then, the RTA design has gained prominence, inspiring numerous improvements and adaptations~\cite{simplex1,simplex2,fuller2020run,phan2020neural,phan2017component,mehmood2022black,desai2019soter}.

The growing use of ML-based perception in safety-critical systems motivated the development of Perception Simplex (\ps)~\cite{bansal2022verifiable,perception_simplex}.
\ps adapts the RTA design to perception systems, providing safety guardrails for AVs in the presence of obstacle detection faults in the AV's ML-based perception.
While \ps guarantees safety under specific constraints, it can cause substantial performance degradation due to conservativeness, even in the absence of faults in ML.

\begin{figure}[t]
    \centering
    \includegraphics[width=\columnwidth]{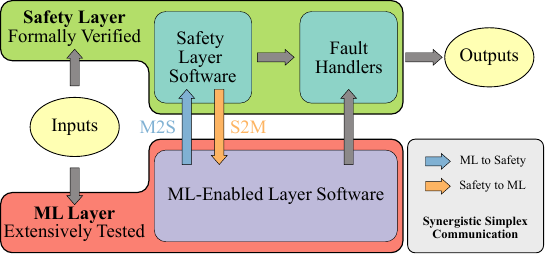}
    \caption{
        Overview of the \textit{\thetermlong (\theterm)} architecture, extended from Perception Simplex~\cite{perception_simplex}.
        As in traditional runtime assurance designs~\cite{simplex_original,simplex1,simplex2,fuller2020run}, the ML Layer executes system's mission, and the Safety Layer enforces deterministic guardrails. 
        \textit{\thetermlong} improves upon prior designs by leveraging bidirectional communication between the layers.
    }
    \label{fig:synergistic_simplex_overview}
\end{figure}

This work addresses the performance limitations of RTA designs such as \ps by introducing a holistic, bidirectional integration between two complementary layers: the extensively validated, yet unverifiable, ML-enabled layer responsible for best-effort task execution, and the verifiable safety layer responsible for enforcing deterministic safety properties. 
In this \textbf{\thetermlong (\theterm)} design, illustrated in Figure~\ref{fig:synergistic_simplex_overview}, these two partially redundant layers cooperate to improve system performance without violating safety guarantees.

Safety certification standards prohibit the use of unverifiable ML generated information by the safety layer~\cite{ISO26262,DO178C}, \eg
the use of lane boundary information generated by an ML component in the safety layer for obstacle detection and collision avoidance in \ps.
\theterm overcomes this restriction under formally stated conditions (\Cref{sec:method:mission-to-safety}), preserving safety guarantees while improving performance (\Cref{sec:results}).

The key contributions of this work are:
\begin{itemize}
    \item \thetermlong (\theterm) architecture that leverages synergistic interactions between the safety and ML layers \emph{to improve overall system performance while preserving formal safety guarantees} (\Cref{sec:method}).

    \item Formal safety analysis of \theterm, deriving the conditions under which bidirectional ML-safety communication preserves the deterministic safety guarantees of \ps (\Cref{sec:method:safety-to-mission,sec:method:mission-to-safety}).

    \item A reference \theterm design for AVs (\Cref{sec:application_av}).
\end{itemize}

% % Paper structure description
% The remainder of this paper is organized as follows.
% \Cref{sec:related_work} reviews related work.
% \Cref{sec:method} presents the core architecture of \theterm.
% \Cref{sec:application_av} illustrates the application of \theterm to AV systems through a concrete design example.
% \Cref{sec:experiments,sec:results} describe the experimental setup and report results demonstrating the effectiveness of \theterm.
% \Cref{sec:limitations_and_future_work} discusses limitations and outlines directions for future work.
% Finally, \Cref{sec:conclusion} concludes the paper.

\section{Related Work}
\label{sec:related_work}

This section reviews relevant prior work and positions our contributions within safe autonomy research.

\subsection{Control Simplex}
\label{sec:related_work:control_simplex}

% Redundancy has long been used to achieve fault tolerance in safety-critical cyber-physical systems (CPS) through replicated components or diverse implementations~\cite{ishii1986fault,lala1994architectural,blanke2006diagnosis}. 
% However, traditional computational redundancy remains vulnerable to correlated and design-level faults, which are common in learning-enabled components~\cite{6629559,wiersma2017safety,jha2019ml}. 
% This limitation motivated the use of functional redundancy, in which heterogeneous algorithms or sensing modalities are combined to reduce shared failure modes.
% The Simplex architecture builds directly on this idea.
Redundancy has long been used to achieve fault tolerance in safety-critical cyber-physical systems (CPS) through replication or diverse implementations~\cite{ishii1986fault,lala1994architectural,blanke2006diagnosis}. 
However, traditional computational redundancy remains vulnerable to correlated and design-level faults common in learning-enabled components~\cite{6629559,wiersma2017safety,jha2019ml}. 
This limitation motivates functional redundancy, where heterogeneous algorithms or sensing modalities reduce shared failure modes.
The Simplex architecture builds on this idea.

The Simplex architecture~\cite{simplex_original,simplex1,simplex2} is foundational to our work. 
It pairs a high-performance but unverified controller with a simple, verifiable backup monitored at runtime. When the system approaches an unsafe region, control is switched to the safety controller.

Control Simplex has been applied broadly in CPS and autonomy, including variants such as R-Simplex~\cite{wang2018rsimplex}, SL1-Simplex~\cite{mao2023sl1}, and other extensions~\cite{musau2022using}.  
Neural Simplex~\cite{phan2020neural} introduces limited communication from the safe to the complex controller to reduce unnecessary fallbacks, but this interaction is restricted to low-level control.
In contrast, \theterm operates at the perception layer and enables richer ML-safety cooperation.  
We discuss its perception layer predecessor, Perception Simplex~\cite{perception_simplex}, in \Cref{sec:related_work:perception_simplex}.

\subsection{Perception Simplex}
\label{sec:related_work:perception_simplex}
\label{sec:related_work:verifiable_obstacle_detection}
\label{eq:vod_obstacle_detection_requirement}

\begin{figure}
    \centering
    \includegraphics[width=0.8\columnwidth]{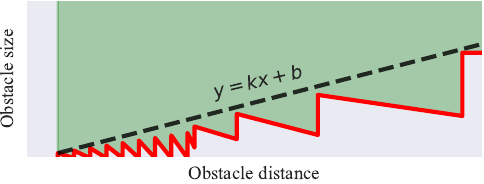}
    \caption{
    Detectability region for LiDAR-based obstacle detection, adapted from~\cite{bansal2022verifiable}. 
    Obstacles above the dashed line are guaranteed to be detected based on their height and distance.
    }
    \label{fig:detectability_region}
\end{figure}

Perception Simplex (\ps)~\cite{perception_simplex} provides deterministic safety guarantees for AVs by separating the autonomy stack into an unverifiable ML layer and a verifiable safety layer.
The safety layer implements a classical LiDAR-based obstacle-existence detector built on Verifiable Obstacle Detection (VOD)~\cite{bansal2022verifiable}, which derives geometric conditions under which the Depth Clustering algorithm~\cite{bogoslavskyi16iros,bogoslavskyi17pfg} is guaranteed to detect an obstacle.
As illustrated in \Cref{fig:detectability_region}, detection is guaranteed whenever an obstacle at distance~$x$ has height~$y$ satisfying $y \ge k x + b$, where $k$ and $b$ depend on sensor and algorithm parameters.
This bound enables a provably safe maximum speed: the ML layer provides high-performance perception and planning, while the safety layer ensures collision avoidance whenever the vehicle operates within this limit.

At runtime, the safety layer monitors the ML layer and overrides when it detects a safety-critical obstacle missed by the latter, ensuring collision avoidance under faulty conditions in obstacle-existence detection. However, \ps covers only obstacle existence (not classification) and its override action is restricted to full stop, which can induce conservative behavior. \theterm extends \ps by enabling carefully constrained bidirectional communication between the two layers, preserving its formal guarantees while reducing conservativeness.

\subsection{Safety Approaches for ML Perception}
\label{sec:related_work:priority_inversion_in_dnns}

% Several complementary directions have been explored to improve the safety of ML-based perception in autonomous systems.
Beyond runtime assurance-based designs, several complementary directions have been proposed to improve the safety of ML-based perception in autonomous systems.

Priority inversion in ML-based perception arises when networks allocate uniform computation across the input, delaying processing of safety-critical regions~\cite{priority_inversion}. Prior work mitigates this through externally proposed or heuristic region partitioning: Liu~\etal~\cite{liu2020removing,liu2021real}, Hu~\etal~\cite{hu2021exploring,hu2022real}, and Chen~\etal~\cite{chen2021lidar} use external agents to rank camera or LiDAR regions. Tung~\etal~\cite{tung2022irrelevant} remove irrelevant pixels before focused convolution, and Kang~\etal~\cite{kang2022dnn} and Liu~\etal~\cite{liu2022self} use regions of interest to focus the DNNs.

Uncertainty estimation offers another avenue, enabling ML models to quantify their own confidence in perception outputs. Kendall and Gal~\cite{NIPS2017_2650d608} distinguish aleatoric and epistemic uncertainty in deep learning for perception tasks, while Lakshminarayanan~\etal~\cite{NIPS2017_9ef2ed4b} propose deep ensembles as a scalable approach to predictive uncertainty estimation. Araujo~\etal~\cite{e26080634} survey uncertainty methods specifically in the context of autonomous driving perception. While these approaches can flag potentially unreliable outputs, they do not provide deterministic safety guarantees.

Formal verification of neural networks attempts to provide stronger correctness guarantees by proving properties of network behavior. Katz~\etal~\cite{katz2017reluplex} introduce Reluplex, an SMT-based solver for verifying deep neural networks, and Huang~\etal~\cite{huang2017safety} develop safety verification methods for DNNs. However, these techniques remain computationally expensive and do not scale to the large networks used in modern autonomy stacks.

In contrast, \theterm does not attempt to verify or constrain the ML layer directly. Instead, it preserves formal safety guarantees by pairing the ML layer with a verifiable safety layer, tolerating ML faults at runtime rather than eliminating them at design time.
\section{Method}
\label{sec:method}

We now present the design of the proposed \theterm framework. 
We emphasize that \theterm is not a standalone autonomy stack. 
It is an architectural augmentation that operates atop an existing, fully functional ML layer, providing verifiable safety coordination without replacing any autonomy components.
Importantly, \theterm is agnostic to the underlying ML model and makes no assumptions about the architecture or algorithm of the ML layer, operating atop any module that proposes actions.

At its core, \theterm extends the \ps architecture~\cite{perception_simplex} by generalizing its unidirectional supervisory structure into a \emph{bidirectional} information exchange between the mission and safety layers, as illustrated in \Cref{fig:synergistic_simplex_overview}. 
Classical Simplex designs rely on a one-way supervisory link, where the safety layer monitors ML layer actions through \emph{Fault Handlers} and intervenes only when safety conditions defined by the \emph{Safety Layer Software} are violated. 
In contrast, \theterm establishes a two-way information exchange architecture between the safety and ML layers that allows the software of both layers to cooperate, improving overall performance beyond what \ps achieves while preserving formal safety guarantees.
The following subsections formalize the Safety-to-ML (S2M) and ML-to-Safety (M2S) communication links and establish that, under their respective assumptions, both extensions preserve the safety guarantees of the original \ps architecture.

\subsection{Safety Layer to ML Layer Communication}
\label{sec:method:safety-to-mission}

The safety-to-ML (S2M) link, shown in \Cref{fig:synergistic_simplex_overview}, enables the safety layer to convey verified information back to the ML layer. 
When a violation is detected by the \emph{Fault Handlers}, the safety layer communicates the violated constraints, allowing the ML layer to recompute actions within verified bounds. 
Formally, when the ML layer proposes an action $a_{M}$ that violates safety constraints $C_S$ specified by the \emph{Safety Layer Software}, the safety layer does not immediately override it. Instead, the fault handler communicates $C_S$ to the ML layer in an acceptable form $f_\text{S2M}(C_S)$, where $f_\text{S2M}$ transforms $C_S$ to match the ML layer input. It then requests recomputation of the action under these constraints. Let $\hat{a}_{M}$ denote the recomputed action. The system proceeds as follows:
\begin{itemize}
    \item If the recomputed action $\hat{a}_{M}$ satisfies all safety constraints $C_S$, the system adopts $\hat{a}_{M}$ as the final action.
    \item If no such safe recomputation is possible, the safety layer overrides with its own verifiable safe action $a_\text{safe}$.
\end{itemize}
\ps can be viewed as a special case of this algorithm (see \Cref{alg:s2m}) in which the recomputed action is fixed to $a_\text{safe}$.

\begin{algorithm}
\caption{Safety-to-ML Cooperative Action Selection}
\label{alg:s2m}
\begin{algorithmic}[1]
\Require Mission proposal $a_{M}$, safety constraints $C_S$
\Ensure Final system action $a^{*}$

\If{$a_{M}$ satisfies $C_S$}
    \State $a^{*} \gets a_{M}$
\Else
    \State Convert $C_S$ into ML layer guardrails $f_\text{S2M}(C_S)$
    \State Request recomputation $\hat{a}_{M}$ under $f_\text{S2M}(C_S)$
    \If{$\hat{a}_{M}$ satisfies $C_S$}
        \State $a^{*} \gets \hat{a}_{M}$
    \Else
        \State $a^{*} \gets a_\text{safe}$ \Comment{Fallback safe override}
    \EndIf
\EndIf

\State \Return $a^{*}$
\end{algorithmic}
\end{algorithm}

We formalize the S2M selection. Let $\mathcal{X}$ be the state space and $\mathcal{A}$ the action space. For a given state $x\in\mathcal{X}$, let
\begin{align}
    \mathcal{S}(x)\;=\;\{\,a\in\mathcal{A}\mid \varphi(x,a)\,\}
\end{align}
denote the set of actions that satisfy the safety constraints $C_S$ (\ie the predicate $\varphi(x,a)$ encodes $C_S$). The safety layer provides a verification procedure
\begin{align}
    \mathrm{Ver}(x,a)\in\{\mathrm{true},\mathrm{false}\}
\end{align}
used to test membership in $\mathcal{S}(x)$.

\begin{assumption}[Sound Verification]\label{asm:sound}
For all $x\in\mathcal{X}$ and $a\in\mathcal{A}$, if $\mathrm{Ver}(x,a)=\mathrm{true}$ then $a\in\mathcal{S}(x)$.
\end{assumption}

\begin{assumption}[Safe Maneuverable State Space]
\label{asm:override}
There exists a subset $\mathcal{X}^{S} \subseteq \mathcal{X}$ such that:

\begin{enumerate}
    \item For every $x \in \mathcal{X}^{S}$, the safety layer can produce at least one verifiable safe action 
    $a_{\mathrm{safe}} \in \mathcal{S}(x)$ that ensures the next state satisfies
    \[
        x(t+1) \in \mathcal{X}^{S}.
    \]
    \item The system is initialized in a state $x(0) \in \mathcal{X}^{S}$.
\end{enumerate}
Thus, $\mathcal{X}^{S}$ is an invariant set under the safe action $a_{\mathrm{safe}}$.
\end{assumption}

\begin{assumption}[Validity Domain]\label{asm:validity}
The \emph{Safety Layer Software} operates within its stated validity constraints, so that Assumptions~\ref{asm:sound}–\ref{asm:override} hold at the current $x$.
\end{assumption}

Given a mission proposal $a_M$, Algorithm~\ref{alg:s2m} returns
\begin{align}
    a^{*}\;\in\;\{\,a_M,\ \hat a_M,\ a_\text{safe}\,\},
\end{align}
where $\hat a_M$ is a recomputation under guardrails derived from $C_S$.

\begin{theorem}[Safety of S2M]\label{thm:s2m-safety}
Under \Cref{asm:sound,asm:override,asm:validity}, \Cref{alg:s2m} always returns an action $a^{*}\in\mathcal{S}(x)$.
\end{theorem}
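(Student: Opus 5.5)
The proof is a case analysis over the three branches of \Cref{alg:s2m}, mirroring the fact that $a^{*}\in\{a_M,\hat a_M,a_\text{safe}\}$. First fix the current state $x$. By \Cref{asm:validity}, \Cref{asm:sound,asm:override} hold at this $x$, so we may use soundness of $\mathrm{Ver}$ and the existence of $a_\text{safe}$ there. I interpret every test ``satisfies $C_S$'' in the algorithm as an evaluation of $\mathrm{Ver}(x,\cdot)$, since the safety layer decides membership in $\mathcal{S}(x)$ only through this procedure.

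\emph{Case 1} (line 2): the algorithm returns $a^{*}=a_M$. This happens only if $\mathrm{Ver}(x,a_M)=\mathrm{true}$, so \Cref{asm:sound} gives $a_M\in\mathcal{S}(x)$.

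\emph{Case 2} (line 7): the first check failed and $a^{*}=\hat a_M$. This happens only if $\mathrm{Ver}(x,\hat a_M)=\mathrm{true}$. Soundness again gives $\hat a_M\in\mathcal{S}(x)$. No property of the recomputation or of $f_\text{S2M}$ is needed here, because safety rests entirely on the post-hoc check. This is the key point: the ML layer may remain untrusted.

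\emph{Case 3} (line 9): $a^{*}=a_\text{safe}$. We need $x\in\mathcal{X}^{S}$ in order to invoke \Cref{asm:override}(1), which yields $a_\text{safe}\in\mathcal{S}(x)$. This step is the only real obstacle, and I would handle it by induction over time steps. The base case is \Cref{asm:override}(2), which gives $x(0)\in\mathcal{X}^{S}$.

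For the inductive step, suppose $x(t)\in\mathcal{X}^{S}$. If $a_\text{safe}$ is applied, \Cref{asm:override}(1) directly gives $x(t+1)\in\mathcal{X}^{S}$. If instead $a_M$ or $\hat a_M$ is applied, we must argue that membership in $\mathcal{S}(x)$ keeps the next state in $\mathcal{X}^{S}$. I would make this explicit as the intended reading of $\mathcal{S}(x)$: the safety constraints $C_S$ encode recoverability, i.e.\ $\varphi(x,a)$ implies that the successor lies in $\mathcal{X}^{S}$. This is the same property the assumption attributes to $a_\text{safe}$, and it is exactly what Perception Simplex's constraints guarantee.

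I would state this reading as the interpretation under which the invariant propagates. If the authors intend the theorem to be only per-step, at a state where \Cref{asm:validity} already places $x$ in the domain where \Cref{asm:override} applies, then Case 3 follows immediately. With all three cases covered, every return value of the algorithm lies in $\mathcal{S}(x)$, which concludes the proof.
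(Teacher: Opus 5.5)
Your proposal is correct and matches the paper's proof: the same three-way case split on the returned action, soundness of $\mathrm{Ver}$ for $a_M$ and $\hat a_M$, and \Cref{asm:override} for $a_\text{safe}$. The paper phrases this as a proof by contradiction, which is only cosmetic, and in Case~3 it takes exactly your per-step reading, with \Cref{asm:validity} making \Cref{asm:override} hold at the current $x$. The time induction is therefore unnecessary, and it would in fact rely on a recoverability property of $\mathcal{S}(x)$ (successors of actions in $\mathcal{S}(x)$ remaining in $\mathcal{X}^{S}$) that is not among the stated assumptions.
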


\begin{proof}
We argue by contradiction. Fix $x\in\mathcal{X}$ and suppose Algorithm~\ref{alg:s2m} returns $a^{*}$ with $a^{*}\notin\mathcal{S}(x)$.
By the algorithm’s structure there are three mutually exclusive cases:

\smallskip
\noindent\emph{Case 1: $a^{*}=a_M$.}
The algorithm returns $a_M$ only if $\mathrm{Ver}(x,a_M)=\mathrm{true}$. By Assumption~\ref{asm:sound}, this implies $a_M\in\mathcal{S}(x)$, contradicting $a^{*}\notin\mathcal{S}(x)$.

\smallskip
\noindent\emph{Case 2: $a^{*}=\hat a_M$.}
The algorithm returns $\hat a_M$ only if $\mathrm{Ver}(x,\hat a_M)=\mathrm{true}$. By Assumption~\ref{asm:sound}, $\hat a_M\in\mathcal{S}(x)$, again a contradiction.

\smallskip
\noindent\emph{Case 3: $a^{*}=a_\text{safe}$.}
The algorithm falls back to $a_\text{safe}$ only if both $\mathrm{Ver}(x,a_M)=\mathrm{false}$ and $\mathrm{Ver}(x,\hat a_M)=\mathrm{false}$. By Assumption~\ref{asm:override}, $a_\text{safe}\in\mathcal{S}(x)$, contradicting $a^{*}\notin\mathcal{S}(x)$.

\smallskip
All cases contradict the supposition. Hence $a^{*}\in\mathcal{S}(x)$.
\end{proof}

\subsection{ML Layer to Safety Layer Communication}
\label{sec:method:mission-to-safety}

The ML-to-safety (M2S) link allows the safety layer to incorporate selected outputs from the ML layer while preserving the formal safety guarantees of the architecture. 
Concretely, in this paper we focus on using the ML layer's \emph{lane-detection} output within the safety layer. 
Lane detection is typically reliable and informative enough to enable more effective decision-making in the safety layer.

Traditionally, safety-critical designs avoid incorporating unverifiable ML outputs into the safety layer, precisely to prevent such information from influencing verified safety guarantees.
Arbitrary use of ML information may introduce new fault-propagation paths, potentially undermining existing safety guarantees. 
This subsection formalizes the conditions under which such use is admissible and proves that \theterm retains the guarantees of \ps when using ML layer outputs that satisfy these conditions.
We begin by formalizing the notion of a fault.

\begin{definition}[Fault]
\label{def:fault}
A \emph{fault} is the adjudged or hypothesized cause of an error~\cite{1335465}. 
\end{definition}

In our setting, faults correspond to deviations such as misclassifications or missed detections that may propagate to downstream functions and influence system-level safety.

\begin{figure}
    \centering
    \includegraphics[width=\columnwidth]{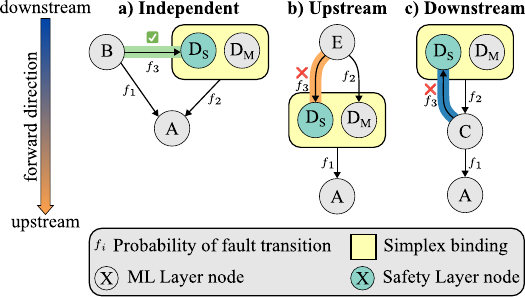}
    \caption{Simplified dependency graph of the autonomy stack, showing independent, upstream, and downstream ML-to-safety-layer links.}
    \label{fig:graph_model}
    \label{fig:graph_model:ml}
    \label{fig:graph_model:ps}
    \label{fig:graph_model:ss}
\end{figure}

\subsubsection{Graph Model}

We begin by formalizing the probabilistic fault model that underlies the M2S analysis. 
Let the autonomy stack be represented as a directed acyclic graph (DAG) $G = (V, E)$, where ``directed'' indicates that each edge has a direction of information flow.
Each node $v_i \in V$ denotes a computational function (\eg perception, planning, or safety), and each edge $(v_i, v_j) \in E$ represents both an information flow and a potential fault-propagation path.  
In this model, edges encode \emph{depend} relations in the sense of system safety: a function that depends on another may inherit its faults, whereas mere use relations without fault-propagation semantics are intentionally excluded.
Each node may produce faulty outputs, modeled by binary random variables $X_i \in \{0,1\}$, where $X_i = 1$ indicates that node (function) $v_i$ exhibits a fault as defined in \Cref{def:fault}.  
This abstraction allows us to reason about how faults propagate through the autonomy stack.

Our formulation is inspired by the deterministic fault tree introduced in \ps~\cite{perception_simplex}, but generalizes it by allowing probabilistic propagation along each edge of $G$. 
For illustration, and to provide intuition for the nodes depicted in \Cref{fig:graph_model}, one may imagine $A$ as a downstream planner, $B$ as an independent lane detector, $C$ as an obstacle-refinement module, $(D_M, D_S)$ as the mission and safety layer obstacle detectors, and $E$ as an upstream preprocessing stage. These mappings are merely illustrative and not inherent to the formal model.

\begin{assumption}[Fault Propagation]
\label{asm:faults_graph}
A fault propagates from a faulty node $X$ to a dependent node $Y$ along each edge with probability $f_{XY} \in [0,1]$. 
\end{assumption}

\begin{assumption}[Downstream Bottleneck]
\label{asm:control_bottleneck}
The dependency graph $G$ contains a unique downstream node $u\in V$
such that every maximal directed path in $G$ terminates at $u$.
\end{assumption}

\Cref{asm:faults_graph} generalizes the fault tree model from \ps~\cite{perception_simplex}, which is recovered when all $f_{XY} = 1$.
\Cref{asm:control_bottleneck} reflects the structure of modern autonomy stacks, in which computations generally converge to a single downstream planning or control command executed by the vehicle.

We additionally impose a structural assumption on the autonomy stack. Modern perception-planning pipelines are typically feed-forward, with information flowing from sensing to control without forming feedback cycles between nodes. The following assumption formalizes this property.
\begin{assumption}[Directed Acyclic Graph Structure]
\label{asm:acyclic_graph}
The dependency graph $G$ is a directed acyclic graph (DAG). That is, it contains no directed cycles.
\end{assumption}

Given the directed acyclic graph $G = (V, E)$, we categorize all ML layer functions according to their structural relationship with the ML layer node $D_M$ that is covered by the safety layer node $D_S$ (see \Cref{fig:graph_model}).
\begin{definition}[Independent, Upstream, and Downstream Functions]
\label{def:task_classes}
Let $D_M \in V$ denote the ML layer module (\eg \emph{obstacle detection}) monitored by the safety layer. 
For any node (function) $X \in V$ in the graph $G$:
\begin{itemize}
    \item \emph{\underline{I}ndependent \underline{F}unctions (IF):}  
    $X$ is independent of $D_M$ if no directed path connects $X$ and $D_M$ in either direction.

    \item \emph{\underline{U}pstream \underline{F}unctions (UF):}  
    $X$ is upstream of $D_M$ if there exists a directed path $X \leadsto D_M$ in $G$.
    
    \item \emph{\underline{D}ownstream \underline{F}unctions (DF):}  
    $X$ is downstream of $D_M$ if there exists a directed path $D_M \leadsto X$ in $G$.
\end{itemize}
\end{definition}

We base our analysis on a simplified dependency graph for clarity. All subsequent results extend to any autonomy graph that satisfies \Cref{asm:faults_graph,asm:control_bottleneck,asm:acyclic_graph}, since the analysis depends only on structural relations (upstream, downstream, independent) rather than on the specific number of modules.

We now analyze, in turn, whether each class of functions (independent, upstream, and downstream) can be safely incorporated into the safety layer through ML-to-safety use, thus augmenting \ps into the proposed \theterm architecture.
The three use configurations shown in \Cref{fig:graph_model} correspond to the independent, upstream, and downstream cases analyzed in \Cref{sec:method:mission-to-safety:independent,sec:method:mission-to-safety:upstream,sec:method:mission-to-safety:downstream} respectively.

\subsubsection{Independent Functions}
\label{sec:method:mission-to-safety:independent}

We begin by analyzing how using independent nodes in the safety layer influences system-level correctness in the architectures considered in this paper: ML baseline, \ps, and \theterm.  
Intuitively, independent functions correspond to disjoint functional branches of the autonomy stack. For example, lane and obstacle detection operate on the same sensor inputs but generate outputs that do not depend on one another (see \Cref{fig:graph_model}a).  
This structural separation ensures that information used from one module cannot propagate faults into another module's correctness domain.

In the unprotected machine-learning baseline, the safety layer node $D_S$ is absent, and all ML layer functions may propagate faults to the downstream node~$A$.  
For illustration, consider the two disjoint upstream branches $B \leadsto A$ and $D_M \leadsto A$.  
Let $f_1 = f_{BA}$ denote the propagation probability along the $B \leadsto A$ edge, and let $f_2 = f_{D_M A}$ denote the propagation probability along the $D_M \leadsto A$ path.  
The resulting failure probability for $A$ is
\begin{align}
% p_\text{ML}(A=1 \mid B=0, D_M=0) &= 0, \\
p_\text{ML}(A=1 \mid B=1, D_M=0) &= f_1, \\
p_\text{ML}(A=1 \mid B=0, D_M=1) &= f_2, \label{eq:ml_od_failure} \\
p_\text{ML}(A=1 \mid B=1, D_M=1) &= f_1 + f_2 - f_1 f_2. \label{eq:full_failure_risk_ml}
\end{align}

\ps introduces a verifiable safety node $D_S$ associated with the ML layer node $D_M$.  
Faults from $D_M$ can affect $A$ only if both $D_M$ and $D_S$ are faulty.  
Under normal operation, $D_S$ monitors the outputs of $D_M$ and overrides them when a verified violation is detected.

\begin{assumption}[Safety Layer Reliability]
\label{asm:safety_layer_node}
The safety layer (node~$D_S$) operates within its verified domain and therefore never produces faults, \ie $p(D_S=1)=0$.
\end{assumption}

% Under \Cref{asm:safety_layer_node}, the fault-propagation probabilities in \ps simplify to
Under \Cref{asm:safety_layer_node}, the probabilities in \ps simplify to
\begin{align}
% p_\text{PS}(A=1 \mid B=0, D_M=0) &= 0, \\
p_\text{PS}(A=1 \mid B=1, D_M=0) &= f_1, \\
p_\text{PS}(A=1 \mid B=0, D_M=1) &= 0, \label{eq:ps_guarantees} \\
p_\text{PS}(A=1 \mid B=1, D_M=1) &= f_1. \label{eq:full_failure_risk_ps}
\end{align}

Hence, \ps guarantees that no unsafe behavior arises when \emph{only} the corresponding ML layer module fails --- in this case, the obstacle detector $D_M$.  
Trivially, it also guarantees safety when none of the ML layer functions fail.

In \theterm, the safety layer (node $D_S$) may use information from an independent ML layer function (\eg node $B$), as illustrated in \Cref{fig:graph_model:ss} via the added $B \to D_S$ edge.  
This configuration corresponds to disjoint branches in the fault tree, where the used information originates from a module whose faults are causally decoupled from those covered by the Simplex relation.

To clarify why the M2S link preserves the formal guarantees of the original \ps, we highlight the standard independence assumptions that underlie Simplex architectures. 
We then show how they naturally extend to lane detection in \theterm.
It is standard in Simplex architectures to assume correctness of certain independent components.
For example, \ps assumes that the braking system operates correctly during an emergency stop: if the brakes were to malfunction, the vehicle could fail to decelerate even when the safety layer issues a verified override. Such failures fall outside the Simplex guarantee envelope because they concern independent subsystems that the safety layer neither monitors nor controls.

The same argument applies to independent software functions used by \theterm. 
Lane detection plays the role of such an essential and independent component: its correctness is required for safe autonomous driving regardless of whether its output is used in the safety layer. If lane detection were faulty, hazards such as drifting out of the lane or entering opposing traffic could occur irrespective of the M2S link. 
At the same time, lane information is precisely what determines whether a detected obstacle lies in the ego lane, where it is safety-critical, or in a neighboring lane, where it poses less immediate threat. 
Thus, assuming correct lane detection is analogous to assuming correct braking behavior in classical Simplex designs. 
Moreover, incorporating lane information allows \theterm to safely reduce conservativeness by discounting obstacles that are not in the ego lane.

% We now make this intuition precise and prove that the independence condition is enough for \theterm to inherit the safety guarantees of \ps.
% When transitioning from \ps to \theterm, we drop \Cref{asm:safety_layer_node}, since the safety layer ($D_S$) now ingests ML layer information.  
% Consequently, $D_S$ can no longer be assumed fault-free, as its correctness depends partially on unverifiable inputs.
% Nevertheless, we show below that, under this independence condition, \theterm retains the formal safety guarantees of \ps.

We now formalize this intuition and show that the independence condition suffices for \theterm to inherit the safety guarantees of \ps.
When transitioning from \ps to \theterm, we drop \Cref{asm:safety_layer_node}, since the safety layer ($D_S$) now ingests ML-layer information and can no longer be assumed fault-free. 
Nevertheless, under this independence condition, \theterm retains the formal safety guarantees of \ps.

\begin{theorem}[Safety of \theterm under Independent M2S Use]
\label{thm:m2s-safety}
When \theterm uses ML layer outputs from functions independent of the one covered by the Simplex relation, the system preserves the safety guarantees of \ps. Formally,
\begin{align}
    p_\text{IF}(A=1 \mid B=0, D_M=1) = 0.
\end{align}
\end{theorem}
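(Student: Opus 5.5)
The plan is to condition on the states of the parents of $D_S$ and trace how a fault could reach $A$. In the \theterm independent configuration (\Cref{fig:graph_model:ss}), $D_S$ gains exactly one new incoming edge, $B \to D_S$. Since $B$ is an independent function in the sense of \Cref{def:task_classes}, no directed path connects $B$ and $D_M$. By \Cref{asm:acyclic_graph}, adding $B \to D_S$ creates no cycle through $D_M$. Hence the only new fault-propagation route into $D_S$ originates at $B$.

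Once \Cref{asm:safety_layer_node} is dropped, $D_S$ can be faulty only through its inputs. These inputs are its verified sensor path, which is fault-free within its validity domain (\Cref{asm:validity}), and the ML-supplied edge from $B$. I would therefore replace \Cref{asm:safety_layer_node} by its conditional form:
\begin{align}
p(D_S=1 \mid B=0) = 0 .
\end{align}
The justification is that with $B=0$ the ingested lane information is correct. The safety layer software then operates exactly as the verified \ps detector, with correct lane boundaries as an extra trusted input, just as braking correctness is trusted. By \Cref{asm:faults_graph}, a fault can only be inherited along an edge from a faulty parent, so a non-faulty $B$ contributes no fault to $D_S$.

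Next I would condition on $B=0$ and $D_M=1$ and enumerate the paths into $A$. On the branch $B \leadsto A$ nothing propagates, since $B=0$. On the branch $D_M \leadsto A$, the Simplex relation routes $D_M$'s output through the monitor $D_S$, so a fault reaches $A$ only if $D_M$ and $D_S$ are both faulty. This is the same AND-structure used to derive \eqref{eq:ps_guarantees}. Because $D_S$ depends only on $B$ among ML nodes, and $B$ is independent of $D_M$, we have $p(D_S=1 \mid B=0, D_M=1) = p(D_S=1 \mid B=0) = 0$. By \Cref{asm:control_bottleneck}, every fault reaching the system output must pass through $A$, so these are the only routes. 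Summing over them gives $p_\text{IF}(A=1 \mid B=0, D_M=1) = 0$, which matches \eqref{eq:ps_guarantees}.

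The main obstacle is justifying the conditional fault-freeness $p(D_S=1 \mid B=0)=0$. The paper's formalism gives no explicit rule for how $D_S$'s fault variable depends on its parents beyond \Cref{asm:faults_graph}. I would argue it by stating that faults arise in $D_S$ only by propagation from faulty parents, with intrinsic faults excluded by verification (\Cref{asm:validity}). The remaining care is showing that independence prevents $D_M=1$ from correlating with $D_S$ via some common upstream node. I would handle this by noting that any shared ancestor $E$ of $B$ and $D_M$ is conditioned away: given $B=0$, $D_S$'s only ML parent is known to be non-faulty, so there is no residual dependence on $D_M$.
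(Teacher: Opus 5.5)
Your proposal is correct and follows the paper's route. The paper's proof likewise adds the single edge $(B,D_S)$ with propagation probability $f_3$, so that $D_S$ can be faulty only when $B=1$, and then reads off $p_\text{IF}(A=1\mid B=0,D_M=1)=0$ from the same AND-gate structure that yields \eqref{eq:ps_guarantees}; you make the implicit modeling step $p(D_S=1\mid B=0)=0$ explicit, which is a fair reading of the paper's model. Your worry about correlation through a common ancestor is not needed: $p(D_S=1\mid B=0)=0$ already gives $p(D_S=1\mid B=0,D_M=1)=0$ by monotonicity, whenever the conditioning event has positive probability.
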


\begin{proof}
The M2S link introduces a new edge $(B, D_S)$, allowing ML information to influence the safety layer.  
Let $f_3 = f_{BD_S} \in [0,1]$ denote the probability that a fault propagates along this new edge.  
The modified fault propagation model yields:
\begin{align}
p_\text{IF}(A=1 \mid B=1, D_M=0) &= f_1, \\
p_\text{IF}(A=1 \mid B=0, D_M=1) &= 0, \\
p_\text{IF}(A=1 \mid B=1, D_M=1) &= f_1 + f_2 f_3 - f_1 f_2 f_3. \label{eq:it_full_failure_risk_ss}
\end{align}
Since $p_\text{IF}(A=1 \mid B=0, D_M=1) = 0$, the safety guarantee against failures in $D_M$ remains identical to \ps.  
Thus, using outputs of independent functions does not affect safety against faults in the Simplex-covered function.
\end{proof}

\paragraph{Risk Bounds Under Multi-Function ML Layer Failures}

It is noteworthy that when the entire input stack of the ML layer fails (\ie both $B$ and $D_M$ are faulty), \ps reduces but does not eliminate the risk of system-level failure.  
It reduces the failure probability from $f_1 + f_2 - f_1 f_2$ to $f_1$ according to \Cref{eq:full_failure_risk_ml,eq:full_failure_risk_ps}. \theterm also reduces the failure probability, but to a lesser extent.
This failure mode lies outside the formal guarantee envelope of \ps and \theterm and therefore does not violate their stated safety properties.

\begin{theorem}[Comparative Risk Bounds]
\label{thm:m2s-risk}
Let $F := \{B=1, D_M=1\}$ denote the event that both ML layer inputs are faulty.
Under $F$, the following inequality holds:
\begin{align}
    p_{\text{PS}}(A=1 \mid \text{F}) \leq p_{\text{IF}}(A=1 \mid \text{F}) \leq p_{\text{ML}}(A=1 \mid \text{F}),
\end{align}
where
\begin{align}
p_{\text{PS}}(A=1 \mid B=1, D_M=1) &= f_1, \\
p_{\text{IF}}(A=1 \mid B=1, D_M=1) &= f_1 + f_2 f_3 - f_1 f_2 f_3, \\
p_{\text{ML}}(A=1 \mid B=1, D_M=1) &= f_1 + f_2 - f_1 f_2.
\end{align}
\end{theorem}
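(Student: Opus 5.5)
The plan is to take the three closed-form expressions as given. They come from \Cref{eq:full_failure_risk_ml}, \Cref{eq:full_failure_risk_ps}, and \Cref{eq:it_full_failure_risk_ss} in the proof of \Cref{thm:m2s-safety}. Each is the probability that at least one of two independent propagation events reaches $A$. In the ML baseline these are the edges $B\to A$ and $D_M\to A$, with probabilities $f_1$ and $f_2$. In \ps, the $D_M$ route is blocked entirely because $p(D_S=1)=0$ by \Cref{asm:safety_layer_node}, leaving only $f_1$. In \theterm, the $D_M$ route succeeds only if the fault also corrupts $D_S$ through the new edge $(B,D_S)$, giving an effective probability $f_2 f_3$. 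Writing all three in the common form $1-(1-f_1)(1-q)$ makes the comparison transparent: $q=0$ for \ps, $q=f_2f_3$ for \theterm, and $q=f_2$ for the ML baseline.

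First I will verify the identity $f_1 + q - f_1 q = 1-(1-f_1)(1-q)$ for each choice of $q$. The function $g(q)=f_1+q(1-f_1)$ is affine in $q$ with slope $1-f_1\ge 0$, since $f_1\in[0,1]$ by \Cref{asm:faults_graph}. Hence $g$ is nondecreasing on $[0,1]$, and both inequalities reduce to showing $0\le f_2f_3\le f_2$.

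That chain holds because $f_2,f_3\in[0,1]$. The left inequality is nonnegativity of a product. The right inequality is $f_2f_3\le f_2\cdot 1$, which uses $f_3\le 1$.

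Concretely, the two differences are
\begin{align}
p_{\text{IF}}(A=1\mid F)-p_{\text{PS}}(A=1\mid F) &= f_2f_3(1-f_1)\ \ge 0,\\
p_{\text{ML}}(A=1\mid F)-p_{\text{IF}}(A=1\mid F) &= f_2(1-f_3)(1-f_1)\ \ge 0.
\end{align}
Each is a product of quantities in $[0,1]$, which gives the claim. I will add that equality cases occur exactly when $f_1=1$, $f_2=0$, or $f_3\in\{0,1\}$.

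The only substantive obstacle is justifying the \theterm expression itself, specifically that the $D_M\to D_S\to A$ route is independent of the $B\to A$ route given $F$. I take this as already established in the proof of \Cref{thm:m2s-safety}, under the edgewise-independent propagation model of \Cref{asm:faults_graph}. Beyond that, the argument is elementary algebra.
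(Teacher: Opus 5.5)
Your proposal is correct and matches the paper's argument. The paper likewise shows $0 \le f_2 f_3(1-f_1) \le f_2(1-f_1)$ from $f_i\in[0,1]$ and adds $f_1$ throughout, which is your monotonicity of $f_1 + q(1-f_1)$ in $q$ applied to $q\in\{0,\, f_2f_3,\, f_2\}$; your explicit differences and equality-case remark are correct extras, and the closed forms need no separate justification since the statement supplies them.
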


\begin{proof}
Since all $f_i \in [0,1]$,
\begin{align}
    0 \leq f_2 f_3 (1 - f_1) \leq f_2 (1 - f_1).
    \label{eq:comparative_risk_bounds}
\end{align}
Adding $f_1$ to each term yields
\begin{align}
    f_1 \leq f_1 + f_2 f_3 (1 - f_1)
        \leq f_1 + f_2 (1 - f_1),
\end{align}
that is,
\begin{align}
    f_1
    \leq
    f_1 + f_2 f_3 - f_1 f_2 f_3
    \leq
    f_1 + f_2 - f_1 f_2.
\end{align}
Identifying these three quantities as
$p_{\text{PS}}(A=1 \mid F)$,
$p_{\text{IF}}(A=1 \mid F)$,
and
$p_{\text{ML}}(A=1 \mid F)$
gives the claimed inequality.
\end{proof}

These results establish that \theterm retains the deterministic safety guarantees of \ps for independent ML layer function use, while being verifiably safe in the scenario where only the corresponding ML layer module fails. 
At the same time, \theterm tolerates a marginally higher overall system risk when the entire mission input stack (\eg $B$ and $D_M$) fails.  
As already mentioned, this failure mode is beyond the formal guarantee envelope of both \ps and \theterm, and the comparison here reflects an extended robustness analysis rather than a weakening of the stated safety properties.
This intentional relaxation trades a small amount of global robustness for improved performance and reduced conservativeness, as empirically confirmed in \Cref{sec:results}, yielding a more balanced safety-efficiency trade-off while preserving the verified safety envelope established by \ps as stated in \Cref{thm:m2s-safety}.  
% A concise qualitative comparison between architectures is summarized in \Cref{tab:comparison}.

% \begin{table}[t]
%     \centering
%     \caption{
%         Comparison of safety guarantees under obstacle-detection faults and other mission-layer faults.
%     }
%     \label{tab:comparison}
%     \renewcommand{\arraystretch}{1.5}

%     \begin{tabularx}{\columnwidth}{
%         m{3.4cm}
%         >{\centering\arraybackslash}X
%         >{\centering\arraybackslash}X
%         >{\centering\arraybackslash}X
%     }
%         \hline
%         \textbf{Scenario or Property}
%             & \ps
%             & \theterm
%             & \emph{ML} \\ \hline

%         \emph{Only obstacle-detection faulty}
%             & Guaranteed safe~\cmark
%             & Guaranteed safe~\cmark
%             & No guarantees~\xmark \\

%         \emph{Any other function faulty}
%             & No guarantees~\xmark
%             & No guarantees~\xmark
%             & No guarantees~\xmark \\

%         \emph{Operational performance} ($\uparrow$)
%             & \textcolor{deepred}{Lowest}
%             & \textcolor{deepyellow}{Medium}
%             & \textcolor{deepgreen}{Highest} \\ \hline
%     \end{tabularx}
% \end{table}

\subsubsection{Upstream Functions}
\label{sec:method:mission-to-safety:upstream}

Upstream functions lie on the same dependency branch as the Simplexed ML layer function, as illustrated in \Cref{fig:graph_model}b.  
Let $f_1 = f_{D_M A}$ denote the propagation probability from $D_M$ to the downstream node $A$, 
and let $f_2 = f_{E D_M}$ denote the propagation probability along the added upstream-to-safety-layer edge $E \leadsto D_M$.
% and let $f_2 = f_{E D_M}$ denote the propagation probability along the edge $E \leadsto D_M$ introduced from an upstream node to the safety layer node.

In the unprotected ML baseline, the safety node $D_S$ is absent, and faults may propagate directly from $D_M$ into $A$:
\begin{align}
    p_\text{ML}(A=1 \mid D_M=0, E=1) &= 0, \\
    p_\text{ML}(A=1 \mid D_M=1, E=0) &= f_1, \\
    p_\text{ML}(A=1 \mid D_M=1, E=1) &= f_1.
\end{align}
Since we condition on $D_M=1$ in the last two cases, the upstream propagation probability $f_2$ does not appear.

When \ps introduces the safety node $D_S$, the Simplex relation prevents any propagation of faults from $D_M$ or its upstream ancestors:
\begin{align}
    p_\text{PS}(A=1 \mid D_M=0, E=1) &= 0, \label{eq:ps_ut_guarantee_1}\\ 
    p_\text{PS}(A=1 \mid D_M=1, E=0) &= 0, \label{eq:ps_ut_guarantee_2}\\
    p_\text{PS}(A=1 \mid D_M=1, E=1) &= 0. \label{eq:ps_ut_guarantee_3}
\end{align}
Thus, \ps enforces that faults in either $E$ or $D_M$ cannot influence $A$.

Now consider extending \ps to \theterm by introducing an upstream use edge $(E, D_S)$.  
Let $f_3 = f_{E D_S}$ denote the fault propagation probability along this new edge.  
In this case:
\begin{align}
    p_\text{UF}(A=1 \mid D_M=1, E=0) &= 0,\\
    p_\text{UF}(A=1 \mid D_M=0, E=1) &= 0,\\
    p_\text{UF}(A=1 \mid D_M=1, E=1) &= f_1 f_3.
\end{align}

The new M2S edge enables a fault-propagation path
$E \leadsto D_S \leadsto A,$
which is \emph{structurally impossible} in \ps.  
Whenever $f_1, f_3 > 0$, a fault in $E$ can affect $A$ conditional on $D_M=1$, violating the guarantee in \Cref{eq:ps_ut_guarantee_3}.

\begin{theorem}[Upstream Use Inadmissibility]
\label{thm:upstream-inadmissible}
Under \Cref{asm:faults_graph,asm:acyclic_graph}, introducing an upstream use edge $(E, D_S)$ violates the Perception Simplex safety guarantee
\begin{align}
    p_\text{PS}(A=1 \mid D_M=1, E=1) = 0.
\end{align}
\end{theorem}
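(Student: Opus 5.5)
The plan is to exhibit the fault-propagation path that the new edge creates and show it carries positive probability. Under \Cref{asm:faults_graph}, the argument is a direct computation of $p_\text{UF}(A=1 \mid D_M=1, E=1)$ in the augmented graph, followed by a comparison with the \ps value in \Cref{eq:ps_ut_guarantee_3}.

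\emph{Step 1 (structural setup).} I will first use \Cref{asm:acyclic_graph} to check that adding $(E, D_S)$ keeps $G$ acyclic. This holds because $E$ is upstream of $D_M$ and $D_S$ only feeds into $A$, so the downstream bottleneck structure still routes everything to $A$. I will then note that in the augmented graph the only edges into $A$ relevant to this event are $(D_M, A)$, gated by the Simplex relation, and $(D_S, A)$. In \ps, $D_S$ has no ML parents, so \Cref{asm:safety_layer_node} gives $D_S=0$ surely. This recovers \Cref{eq:ps_ut_guarantee_1,eq:ps_ut_guarantee_2,eq:ps_ut_guarantee_3}.

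\emph{Step 2 (computing the new conditional).} Once the edge $(E, D_S)$ is present, \Cref{asm:safety_layer_node} can no longer be assumed. Conditioning on $E=1$, \Cref{asm:faults_graph} gives $p(D_S=1 \mid E=1)=f_3$. The Simplex relation lets a fault in $D_M$ reach $A$ only when the monitor is also faulty. So, conditioning additionally on $D_M=1$, propagation to $A$ occurs with probability $f_1$, independently along the edge. This yields $p_\text{UF}(A=1 \mid D_M=1, E=1) = f_1 f_3$, matching the expression displayed before the theorem. The other two conditionals stay $0$: if $E=0$, then $D_S$ cannot be faulty, and if $D_M=0$, no fault passes through the Simplex gate.

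\emph{Step 3 (violation).} Because the edges represent genuine dependencies, the propagation probabilities are not identically zero, so one can take $f_1, f_3 > 0$. Then $f_1 f_3 > 0 = p_\text{PS}(A=1 \mid D_M=1, E=1)$, and the guarantee fails.

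The main obstacle is Step 2. There I must justify cleanly that the Simplex gate makes propagation through $A$ conditional on $D_S$ being faulty, and that the two propagation events combine multiplicatively. That product form relies on the independence of per-edge propagation implicit in \Cref{asm:faults_graph}. I would state this explicitly, mirroring how $f_2 f_3$ arose in the proof of \Cref{thm:m2s-safety}.

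I would also remark on the edge case $f_1 f_3 = 0$. In that case the path is degenerate and the edge effectively carries no dependency, which the graph semantics exclude. So the theorem should be read as holding for any nontrivial upstream use.
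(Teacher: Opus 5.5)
Your proposal is correct and follows essentially the same argument as the paper. You identify the new propagation path created by $(E, D_S)$, compute $p_\text{UF}(A=1 \mid D_M=1, E=1) = f_1 f_3$, and note that this is strictly positive whenever $f_1, f_3 > 0$, which contradicts \Cref{eq:ps_ut_guarantee_3}; your acyclicity check, your explanation that $f_1 f_3$ arises from a faulty $D_S$ failing to gate the $(D_M, A)$ propagation under per-edge independence (as $f_2 f_3$ does in \Cref{thm:m2s-safety}), and your remark on the degenerate case $f_1 f_3 = 0$ simply make explicit what the paper's short proof leaves implicit.
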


\begin{proof}
Since $E \leadsto D_M$, a fault in $E$ may corrupt $D_M$, but \ps enforces
\begin{align}
    p_\text{PS}(A=1 \mid D_M=1, E=1)=0.
\end{align}
Introducing the edge $(E, D_S)$ creates the additional propagation path 
$E \leadsto D_S \leadsto A.$
Conditioned on $D_M=1$, this yields
\begin{align}
    p_\text{UF}(A=1 \mid D_M=1, E=1) = f_1 f_3,
\end{align}
which is strictly positive when $f_1,f_3>0$ and therefore contradicts the \ps guarantee. Hence, upstream use is prohibited.
\end{proof}

Therefore, \theterm prohibits incorporating upstream ML layer outputs into the safety layer, ensuring preservation of the core \ps safety invariants.

\subsubsection{Downstream Functions}
\label{sec:method:mission-to-safety:downstream}

Downstream functions lie strictly after the Simplexed ML layer node $D_M$ in the dependency graph and therefore do not influence the correctness of $D_M$ or the safety layer node $D_S$ (\Cref{fig:graph_model}c). Introducing an M2S link from such a function to $D_S$ would create a backward edge that violates \Cref{asm:acyclic_graph} and fundamentally alters the model by enabling temporal or multi-cycle dependencies. Such connections fall outside the static, single-cycle fault-propagation model analyzed in this paper, and a rigorous treatment would require an explicit temporal extension of the dependency graph, which is beyond the scope of the present formulation.
We therefore exclude downstream-to-safety use from our admissible M2S set. A complete analysis of this case is left for future work.

% Downstream functions lie strictly after the Simplexed ML layer node $D_M$ in the dependency graph, and therefore do not influence the correctness of $D_M$ or the safety layer node $D_S$ (\Cref{fig:graph_model}c). Under the \Cref{asm:faults_graph,asm:acyclic_graph}, all information flow is feed-forward: faults originating in downstream nodes occur strictly after the point at which the Simplex decision is made.

% Introducing an M2S link from a downstream function to $D_S$ would create a backward edge that violates the acyclicity assumption (see \Cref{asm:acyclic_graph}) and fundamentally alters the model by enabling temporal or multi-cycle dependencies. Such connections fall outside the static, single-cycle fault-propagation model analyzed in this paper. A rigorous treatment would require an explicit temporal extension of the dependency graph, which is beyond the scope of the present formulation.

% We therefore exclude downstream-to-safety use from our admissible M2S set. We note, however, that if such use were interpreted as incorporating stale or cross-cycle outputs, it could introduce additional failure modes because unverifiable information, which is otherwise unable to affect the current control action, would be allowed to influence the safety layer. A complete analysis of this case is left for future work.
\section{Application to Autonomous Vehicles}
\label{sec:application_av}

\begin{figure}
    \centering
    \includegraphics[width=\columnwidth]{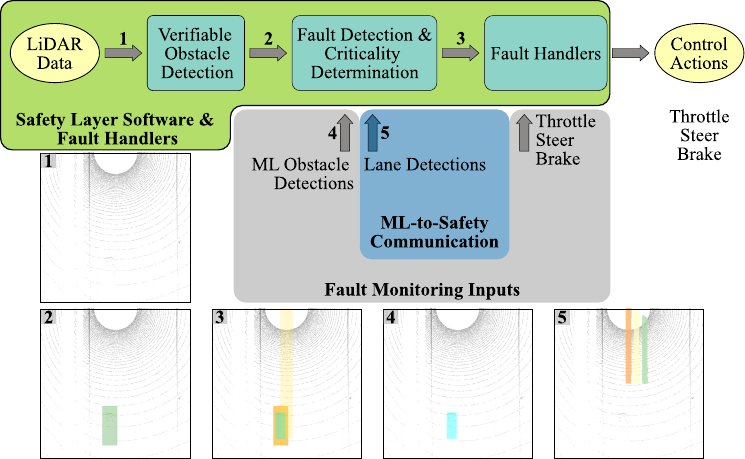}
    \caption{
    Instantiation of \theterm for AVs. 
    The safety layer performs verifiable LiDAR-based obstacle detection using the detectability model of~\cite{bansal2022verifiable} and a verified override policy~\cite{perception_simplex}. 
    Lane-detection outputs flow through the M2S link to refine obstacle criticality without weakening safety guarantees.
    }
    \label{fig:implementation}
\end{figure}

\subsection{ML Layer}

The ML layer is a high-performance autonomy stack for perception implemented using \emph{TransFuser++}~\cite{Jaeger2023ICCV}, a representative, state-of-the-art end-to-end driving model from the well-established CARLA Garage benchmark. As \theterm is agnostic to the underlying ML layer, the evaluation focuses on its coordination with the safety layer rather than model comparison. 
The end-to-end nature of \emph{TransFuser++} represents a challenging instantiation for \theterm: despite not being designed for modular interfacing, it exposes intermediate outputs that can be leveraged through the M2S link, demonstrating the practical accessibility of the \theterm framework even in non-modular settings. Evaluating \theterm with modular autonomy stacks, which natively support constraint injection and would enable full joint evaluation of both communication links, is a natural direction for future work.

% The ML layer is a high-performance autonomy stack for perception, planning, and control, implemented using the CARLA Garage \emph{TransFuser++} model~\cite{Jaeger2023ICCV}. As \theterm is agnostic to the underlying ML layer, the evaluation focuses on its coordination with the safety layer rather than model comparison.

% \emph{TransFuser++} was selected as a representative, state-of-the-art end-to-end driving model from the well-established CARLA Garage benchmark~\cite{Jaeger2023ICCV}. Its end-to-end nature represents a challenging instantiation for \theterm: despite not being designed for modular interfacing, it exposes intermediate outputs that can be leveraged through the M2S link, demonstrating the practical accessibility of the \theterm framework even in non-modular settings. Evaluating \theterm with modular autonomy stacks, which natively support constraint injection and would enable full joint evaluation of both communication links, is a natural direction for future work.

Synchronized RGB and LiDAR inputs are processed to produce object detections, lane estimates, and scene semantics, corresponding to the ML components in \Cref{sec:method}. Waypoints and control commands (steering, throttle, brake) are then generated and executed by a low-level controller. Selected outputs (\eg obstacle detections, ego velocity, control commands, and lane estimates) are exposed to the safety layer for runtime monitoring and M2S communication.

\subsection{Safety Layer}

The safety layer follows the verifiable perception framework of \ps~\cite{perception_simplex}, extended to support the interactions introduced by \theterm, and remains decoupled from the ML layer. It employs a LiDAR-based geometric detector with an analytically derived detectability model, ensuring obstacle detection whenever $y \ge kx + b$ (cf. \Cref{eq:vod_obstacle_detection_requirement}), which defines the maximum safe velocity $v_{\max}^{\mathit{safe}}$ for guaranteed stopping. The safety layer monitors ML outputs (obstacle detections, velocity, and control commands) and issues an override when a critical obstacle is missed. The safe action $a_{\text{safe}}$ applies controlled braking, ensuring collision avoidance below $v_{\max}^{\mathit{safe}}$.

% The safety layer follows the verifiable perception framework of \ps~\cite{perception_simplex}, extended to support the interactions introduced by \theterm. It is decoupled from the ML layer and provides guaranteed obstacle detection and safe overrides.

% A LiDAR-based geometric detector with an analytically derived detectability model ensures obstacle detection whenever $y \ge kx + b$ (cf. \Cref{eq:vod_obstacle_detection_requirement}). This condition defines the maximum safe velocity $v_{\max}^{\mathit{safe}}$ under which the ego vehicle can stop before reaching any detectable obstacle.

% The safety layer monitors ML outputs (obstacle detections, velocity, and control commands) and issues an override when it verifies a missed critical obstacle. The safe action $a_{\text{safe}}$ applies controlled braking, ensuring collision avoidance when operating below $v_{\max}^{\mathit{safe}}$.

\begin{assumption}[No Sensor Failure]
\label{asm:no_sensor_failure}
All sensors operate nominally, providing correct and timely measurements. 
Sensor malfunctions, such as LiDAR dropouts, camera failures, or corrupted signals, are assumed not to occur during evaluation.
\end{assumption}

This assumption is orthogonal to \theterm and outside the scope of this work. Sensor failures are assumed to be handled by state-of-the-art approaches.

\begin{assumption}[Static Obstacles]
\label{asm:static_obstacles}
All obstacles are static.
\end{assumption}

This assumption is inherited directly from \ps~\cite{perception_simplex} and is not introduced by \theterm. The contribution of \theterm is a formal framework for safe bidirectional communication between ML and safety layers. Elevating the static obstacle assumption would require extending the underlying VOD~\cite{bansal2022verifiable}, which is an orthogonal research problem outside the scope of this work.

It is worth noting that this assumption is less restrictive in practice than it may initially appear.
As observed in \ps, the safety guarantees remain applicable when obstacles are moving \emph{away} from the ego vehicle, since such motion only increases the available stopping distance.
This means that both \ps and \theterm apply directly to directed highway driving, where oncoming traffic is physically separated --- a practically significant and common real-world scenario. Handling fully dynamic environments, including adversarial or crossing agents, 
remains an open problem requiring separate treatment.
% remains an important open problem that we believe warrants dedicated treatment as a separate research contribution.
% ; we discuss this further in \Cref{sec:limitations}.

\subsection{Safety Layer to ML Layer Communication}
\label{sec:application_av:safety-to-mission}

The S2M link enables the safety layer to provide safety-verified constraints $f(C_S)$ (\eg obstacle locations) to the ML layer, allowing it to replan within the certified safety envelope instead of being immediately overridden.

In systems with constraint-aware planners, the ML layer can incorporate $f(C_S)$ to generate a safe trajectory, which is executed if it satisfies the safety constraints. Otherwise, the safety layer applies the fallback action $a_{\text{safe}}$.
This mechanism transforms the \ps override into a cooperative process, preserving formal safety while reducing unnecessary interventions.

In the CARLA-based instantiation, the \emph{TransFuser++} model does not support constraint injection due to its end-to-end nature, and S2M cannot be realized. Therefore, S2M is evaluated separately in an auxiliary experiment using a planner-based autonomy stack built on nuPlan~\cite{nuplan}.

\subsection{ML Layer to Safety Layer Communication}
\label{sec:application:mission-to-safety}

To illustrate the practical use of \theterm, we apply it to an AV system, with the safety layer and M2S link shown in \Cref{fig:implementation} and conceptually summarized in \Cref{fig:synergistic_simplex_overview}.

In our AV instantiation of \theterm, the M2S link provides the safety layer with ML outputs that are admissible under \Cref{sec:method:mission-to-safety}. 
We use \emph{lane detection}, which is independent of obstacle detection under \Cref{def:task_classes}. This independence holds in the \emph{TransFuser++}~\cite{Jaeger2023ICCV} implementation, allowing the safety layer to refine obstacle relevance without weakening \ps guarantees. 
Although the model is end-to-end and does not support constraint injection, it exposes intermediate outputs (\eg lane boundaries) that can be leveraged through M2S.

\begin{assumption}[Lane-Detection Correctness]
\label{asm:lane_correct}
The ML layer's lane-detection module provides correct lane-boundary estimates (\ie $B=0$) for the scenarios considered.
\end{assumption}

\textbf{This assumption is not required for the safety guarantees of \theterm} and does not weaken them.
As established in \Cref{sec:method:mission-to-safety:independent}, lane detection is independent of the Simplex-protected function and therefore does not affect the guarantee boundary. 
Moreover, correct lane estimation is already required for safe autonomous driving regardless of whether it is used in the safety layer, analogous to standard assumptions such as correct actuator behavior in classical Simplex designs, \eg that brakes will engage when commanded. 
In practice, lane detection is a comparatively mature perception task and is routinely relied upon by modern autonomy stacks.
We make this assumption solely to ensure that the scenarios considered in our evaluation remain within the formal guarantee envelope, avoiding the need to evaluate behavior in regions where neither \ps nor \theterm provides formal guarantees.
When lane detection is faulty, the system transitions to regions outside the formal guarantee envelope, where neither \ps nor \theterm provides guarantees. Importantly, this does not introduce new failure modes relative to \ps, but simply reflects the inherent limits of the underlying safety model.

The safety layer uses lane-boundary estimates to determine whether a verified obstacle lies in the ego lane or an adjacent lane. A three-zone policy is applied:
(1) \emph{Zone~1} (ego lane): triggers braking as in \ps;  
(2) \emph{Zone~2} (adjacent lanes): triggers lighter mitigation (\ie throttle release);  
(3) \emph{Zone~3}: ignores irrelevant obstacles.  
Compared to \ps, which treats the entire field of view as Zone~1, \theterm uses lane context to reduce unnecessary braking while preserving guarantees (\Cref{fig:zoning_policy}).

\begin{figure}
    \centering
    \includegraphics[width=\columnwidth]{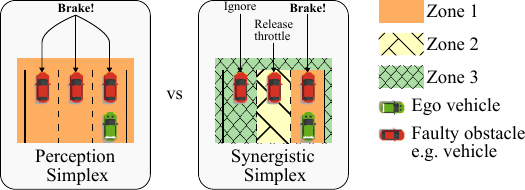}
    \caption{Zoning policy guiding how the safety layer of \theterm responds to obstacles based on lane-relative relevance.}
    \label{fig:zoning_policy}
\end{figure}

\textbf{Preservation of Safety Guarantees.}
\Cref{fig:failures_venn_diagram} visualizes the relationship between ML-layer fault regions and the system's safety guarantees under \ps and \theterm.
\ps provides formal guarantees for the green-highlighted region~1, where obstacle detection is the \emph{only} fault: in this case, the system brakes safely and avoids collision. 
As proven in \Cref{sec:method:mission-to-safety:independent}, incorporating an independent ML function into the safety layer, here lane detection, does not alter this guarantee boundary in \theterm.

If lane detection degrades, the overall system transitions to region~3. If both lane and obstacle detection degrade, it transitions to region~2. 
However, both regions lie outside the guarantee region of \ps, which covers only obstacle-detection faults. 
Thus, they are also outside the guarantees of \theterm. 

From the perspective of both \ps and \theterm, these regions are equivalent in that each involves at least one fault beyond obstacle detection, and therefore falls outside the scope of formal guarantees. 
\theterm therefore preserves the deterministic safety properties of \ps exactly, while using lane context to reduce conservativeness within the guaranteed region (region~1).

\begin{figure}
    \centering
    \includegraphics[width=\columnwidth]{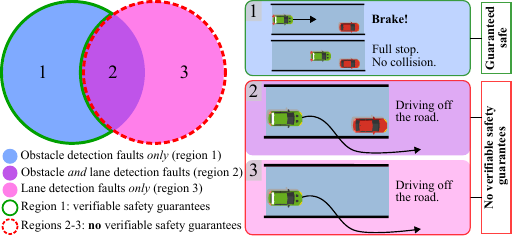}
    \caption{
    ML fault regions and their relation to the safety guarantees of \ps and \theterm.
    Region~1 (obstacle detection fault only) lies within the formal guaranteed safe region.
    Regions~2--3 lie outside the guaranteed safe region of both \ps and \theterm.
    Circle sizes are illustrative and do not reflect fault frequency.
    }
    \label{fig:failures_venn_diagram}
\end{figure}
\section{Evaluation Setup}
\label{sec:experiments}

We evaluate \theterm primarily in the CARLA simulation environment, focusing on the safety layer and the M2S link. 
Because the end-to-end policy does not support constraint injection (see \Cref{sec:application_av:safety-to-mission}), the S2M link is evaluated separately in an auxiliary experiment based on nuPlan~\cite{nuplan}.

We evaluate \theterm primarily in the CARLA simulation environment, focusing on the safety layer and the M2S link. Because the end-to-end policy does not support constraint injection (see \Cref{sec:application_av:safety-to-mission}), the S2M link is evaluated separately in an auxiliary experiment based on nuPlan~\cite{nuplan}. While this means the full bidirectional architecture is not demonstrated within a single autonomy stack, the two links address structurally independent mechanisms and their separate evaluation provides valid evidence for each. Joint evaluation using a modular autonomy stack that natively supports both intermediate output exposure and constraint injection remains an important direction for future work, consistent with the discussion in \Cref{sec:application_av}.

\subsection{Simulation Setup}
\label{sec:experiments:evaluation_environment}

\subsubsection{CARLA Evaluation Setup}
\label{sec:experiments:evaluation_environment:carla}

We use the CARLA simulator~\cite{Dosovitskiy17} with \textbf{CARLA Garage}~\cite{Jaeger2023ICCV}, which provides a modular autonomy stack and standardized scenarios. The \textit{TransFuser++} model serves as the ML layer, and we integrate the safety layer to realize the CARLA instantiation of \theterm, focusing on safety-layer behavior and M2S communication.

Experiments are conducted in controlled, traffic-free environments based on the ``Parked Obstacle'' scenario from the CARLA Leaderboard benchmark\footnote{\url{https://leaderboard.carla.org/scenarios/}}, enabling deterministic fault injection and reproducible evaluation.
All runs use consistent environmental conditions (daylight, clear weather, and good road quality). LiDAR measurements are collected at $2\,\mathrm{Hz}$. 
The evaluation setup will be open-sourced upon publication.

\subsubsection{nuPlan Evaluation Setup}
\label{sec:experiments:evaluation_environment:nuplan}

To evaluate the S2M link, we use the nuPlan simulation environment~\cite{nuplan}, which exposes a planner interface for constraint injection. In this setting, the safety layer provides verified constraints (\eg obstacle locations) to the planner, enabling replanning under S2M using a fault model consistent with the CARLA experiments.

\subsection{Models}
\label{sec:experiments:baselines_and_ss_variants}

We consider a set of autonomy architectures and variants of \theterm across the evaluation setups described above.

\paragraph{ML-Based Agent (TransFuser++) [CARLA]}

We use \textit{TransFuser++} from CARLA Garage~\cite{Jaeger2023ICCV} as the ML-only baseline, representing a high-performance learning-based policy without formal safety guarantees.

This baseline serves as a reference point for evaluating the safety benefits of \ps and \theterm. 
Because it lacks any form of fault detection or correction, its performance degradation under injected perception faults directly illustrates the vulnerability of unconstrained ML-based autonomy.
Note that the causes of ML perception faults are out of the scope of this work, and therefore fault injection is used to measure the effect of faults that ML-based systems are susceptible to~\cite{
jenn2020identifying, 
xu2021machine, 
mohseni2022taxonomy, 
araujo2024road, 
paterson2025safety, 
crash_ntsb_tesla_2016_5_7, 
crash_ntsb_uber_2018_3_18, 
% crash_ntsb_tesla_2018_3_23, 
crash_ntsb_tesla_2019_3_1, 
NHTSA_SGO_Reports
}.

\paragraph{\ps\ [CARLA]}

A formally verified baseline (\Cref{sec:related_work:perception_simplex}) that augments the ML layer with a safety layer that overrides unsafe actions based on VOD~\cite{perception_simplex,bansal2022verifiable}.

\paragraph{M2S-Only \theterm\ [CARLA]}

An ablation including only the M2S link, where the safety layer selectively incorporates ML outputs according to \Cref{sec:method:mission-to-safety}. This variant captures the core mechanism of \theterm and is evaluated in CARLA.

\paragraph{\theterm}  

The full architecture combines M2S and S2M links, enabling bidirectional interaction between the ML and safety layers. Due to architectural constraints, M2S is evaluated in CARLA and S2M in a planner-based setting.

\paragraph{S2M-Only \theterm\ [nuPlan]} 

An ablation including only the S2M link, where the safety layer provides constraint feedback for replanning. This variant is evaluated in nuPlan~\cite{nuplan}.

In summary, the primary evaluation focuses on CARLA (ML, \ps, M2S), while S2M is assessed separately in nuPlan.

\subsection{Experiments Description}
\label{sec:experiments:description}

We conduct two controlled experiments in the CARLA environment (\Cref{sec:experiments:evaluation_environment:carla}) to evaluate the safety layer and the ML-to-safety mechanism of \theterm under obstacle detection faults in the ML layer. 
Experiment~1 is additionally evaluated in nuPlan to assess the S2M link in a planner-based setting.
The configurations of both experiments are illustrated in \Cref{fig:experiments_overview}. 

\paragraph{Experiment 1: In-Lane Obstacle}

The ego vehicle approaches a static obstacle in its lane that is missed by the ML layer. This scenario evaluates collision avoidance via the safety layer. In nuPlan, it additionally tests whether S2M enables timely replanning to avoid unnecessary stopping.

\paragraph{Experiment 2: Nearby-Lane Obstacle}

An obstacle is placed in a neighboring lane closer to the ego lane and missed by the ML layer. This scenario evaluates whether the M2S link reduces unnecessary interventions by incorporating lane context, maintaining safety while improving efficiency.

\begin{figure}
    \centering
    \includegraphics[width=\columnwidth]{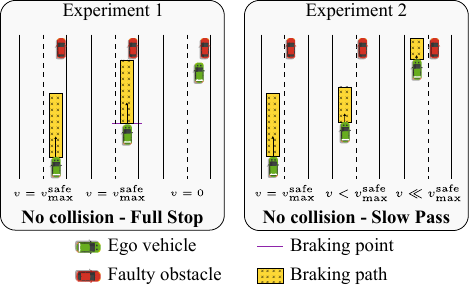}
    \caption{
    Visualization of the simulation scenarios used to evaluate \theterm. 
    Experiment~1 places an undetected obstacle in the ego lane to test the safety layer override behavior. 
    Experiment~2 places an undetected obstacle in an adjacent lane to assess whether \theterm can avoid unnecessary braking when the obstacle is close to being safety-critical.
    }
    \label{fig:experiments_overview}
\end{figure}

Under nominal behavior, routes are completed in $\sim 10\,\text{s}$. Episodes terminate upon full stop after emergency braking. Each configuration is evaluated over 10 runs.
In CARLA, we evaluate three models (ML, \ps, M2S-only \theterm) across two experiments (60 runs total). The S2M evaluation in nuPlan is reported separately in \Cref{sec:results:s2m_in_nuplan}.

\subsection{Fault Injection}
\label{sec:experiments:fault_injection}

To focus on system-level resilience rather than the specific origins of perception faults, we inject faults directly at the decision level of the ML layer.  
This procedure is applied in the CARLA-based experiments described above and targets the evaluation of the safety layer and the M2S mechanism of \theterm. 
This approach bypasses the details of fault generation (\eg sensor noise or adversarial perturbations) and instead isolates how the safety architecture responds to erroneous ML layer behavior in a controlled and reproducible manner.

Faults are induced by recording nominal ML actions in obstacle-free runs and replaying them in scenarios with obstacles, simulating missed detections that lead to unsafe behavior.

This injection strategy ensures that all evaluated models in the CARLA setting, \ie the ML-based agent, \ps, and the M2S-only variant of \theterm, operate under identical ML layer fault conditions, allowing us to attribute any differences in outcome solely to the architectures' safety mechanisms rather than the stochasticity of perception. 

\textbf{Expected Outcomes.}
The ML-based agent is expected to collide in \emph{Experiment~1} and proceed normally in \emph{Experiment~2}. 
The \ps baseline avoids collisions but may exhibit overly conservative braking. 
The M2S-enabled variant is expected to maintain zero collisions while reducing unnecessary interventions through the use of ML-layer context.

\subsection{Evaluation Metrics}
\label{sec:experiments:metrics}

To assess the safety and functional performance of each architecture, we use three quantitative metrics: \emph{Collision Rate}, \emph{Time to Completion}, and \emph{Route Completion Rate}. These metrics capture complementary aspects of performance and safety.

\paragraph{Collision Rate (CR)}  
This metric records the number of collisions that occur during a single simulation run.  
For each episode~$\tau$, we measure
$c(\tau) \in \{0,1,2,\dots\},$
the total number of impacts involving the ego vehicle.  
% Lower values indicate safer behavior and fewer violations of the safety envelope.  
% This metric directly reflects the system’s robustness to perception faults and its ability to avoid safety-critical failures.

\paragraph{Time to Completion (TC)}  
For each episode~$\tau$, we measure the time required for the ego vehicle to reach the end of the designated route without violating safety constraints.  
Let 
$t(\tau) \in \mathbb{R}_{>0}$
denote the elapsed simulation time until successful route completion.  
% If the vehicle cannot complete the route (due to an override, stall, or collision), the run is marked as incomplete.  
% Lower completion times indicate more efficient behavior and fewer unnecessary slowdowns caused by conservative safety interventions.

\paragraph{Route Completion Rate (RCR)}
For each episode~$\tau$, this metric records whether the ego vehicle reaches the goal. If the vehicle completes the route, it is marked as $\tau=1$. Otherwise it is marked as $\tau=0$.
\section{Results}
\label{sec:results}

\begin{figure}
    \centering
    \includegraphics[width=\columnwidth]{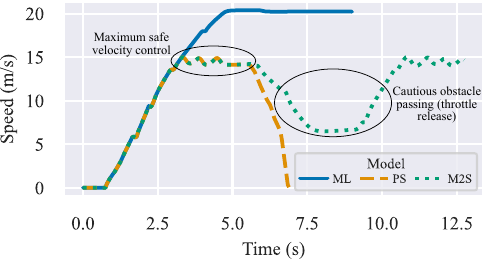}
    % \caption{
    % Example route from Experiment~2. The ML-based agent completes the route fastest by exceeding the safe velocity, but without safety guarantees. \ps exhibits conservative behavior and stalls due to emergency braking, as it overestimates a nearby-lane obstacle as safety-critical. In contrast, the M2S-enabled variant completes the route safely by applying throttle-release mitigation for obstacles outside the ego lane, thereby avoiding unnecessary emergency braking while preserving safety guarantees.
    % }
    \caption{
    % Example run from Exp.~2. The ML agent completes fastest by exceeding the safe velocity (without guarantees). \ps stalls due to conservative emergency braking, while the M2S variant completes the route safely via throttle-release mitigation.
    Example run from Exp.~2. The ML agent completes fastest by exceeding $v_\text{max}^\text{safe}$ (without guarantees). \ps stalls due to conservative emergency braking, while the M2S variant completes the route safely via throttle-release mitigation.
    }
    \label{fig:example_exp2_run}
\end{figure}

We report CARLA-based results focusing on the safety layer and the M2S mechanism of \theterm, with metrics summarized in \Cref{tab:macro_results}. 
An auxiliary S2M evaluation in a planner-based setting is presented in \Cref{sec:results:s2m_in_nuplan}. 
Representative CARLA and nuPlan scenarios are shown in \Cref{fig:screenshots_of_experiments}.

\begin{figure}
    \centering
    \includegraphics[width=\columnwidth]{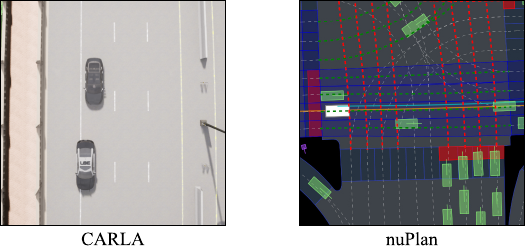}
    % \caption{
    % Representative snapshots from the CARLA (primary evaluation) and nuPlan (auxiliary S2M evaluation) scenarios.
    % }
    \caption{
    Representative snapshots from CARLA (primary evaluation) and nuPlan (auxiliary S2M evaluation), illustrating realistic closed-loop behavior in both simulation settings.
    }
    \label{fig:screenshots_of_experiments}
\end{figure}

\subsection{Experiment 1: In-Lane Obstacle}

The first experiment evaluates cases in which a faulty obstacle lies directly within the ego lane and poses an immediate collision risk. 
As expected, the ML-based agent fails in all runs: because the injected fault suppresses the in-lane obstacle, the ML layer never initiates a replan and consequently collides.

In contrast, all safety-backed architectures evaluated in CARLA, \ie \ps and the M2S \theterm, exhibit the expected fail-safe behavior. When the ML layer fails to detect a safety-critical obstacle, the safety layer triggers conservative braking, yielding $\text{CR}=0.0$ in all runs. 
However, none of these architectures completes the route (\ie $\text{RCR}=0.0$), as emergency braking halts progress once the vehicle enters the braking zone.

These results reflect a limitation of our evaluation protocol, as episodes are terminated once emergency braking brings the vehicle to a full stop, and therefore route completion is not observed even though safety is preserved. 
To assess whether S2M communication can enable continued progress under such conditions, we additionally evaluate the S2M link in a planner-based setting using nuPlan~\cite{nuplan} (see \Cref{sec:results:s2m_in_nuplan}). In that setting, where the planner can incorporate externally provided constraints and replan earlier, S2M enables successful route completion in a significant fraction of runs, demonstrating its potential when paired with a sufficiently responsive planner.

\begin{figure}
    \centering
    \includegraphics[width=\columnwidth]{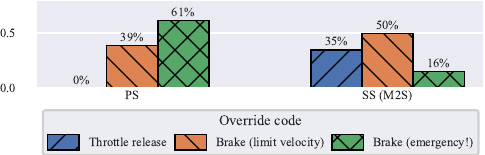}
    \caption{
    Analysis of safety layer interventions in Exp.~2. 
    The M2S-enabled variant shifts safety responses from conservative emergency braking to throttle-release mitigation, reducing unnecessary interventions while preserving safety.
    }
    \label{fig:micro_results}
\end{figure}

\subsection{Experiment 2: Nearby-Lane Obstacle}

The second experiment places faulty obstacles in a neighboring lane rather than in the ego lane, making lane context essential for correct threat assessment. As expected, none of the agents collide with the obstacle, since it does not pose a direct hazard ($\text{CR}=0.0$ for all architectures).

Under \ps, the safety layer lacks lane context and therefore treats some of these non-threatening obstacles as safety-critical due to their proximity. This conservativeness triggers unnecessary braking and premature route termination, resulting in low route-completion rates.
% \begin{figure}
%     \centering
%     \includegraphics[width=\columnwidth]{Figures/Micro Results.pdf}
%     \caption{
%     Analysis of safety layer interventions in Exp.~2. 
%     The M2S-enabled variant shifts safety responses from conservative emergency braking to throttle-release mitigation, reducing unnecessary interventions while preserving safety.
%     }
%     \label{fig:micro_results}
% \end{figure}
In contrast, the M2S-enabled variant of \theterm leverages ML-layer lane detections to classify these obstacles as less critical (Zone~2 in \Cref{fig:zoning_policy}), invoking throttle-release mitigation rather than emergency braking. Consequently, it achieves substantially higher route-completion rates than \ps, braking only in rare cases where lane detection indicates that the obstacle slightly encroaches on the ego lane.

Although the time to completion (TC) of the M2S-enabled variant is higher than that of the unconstrained ML policy, this increase reflects the deliberate use of mitigative actions under formal safety guarantees, as throttle release inevitably slows progress. Crucially, the M2S-enabled design reduces excessive conservativeness while preserving the same safety guarantees as \ps.
An illustrative route demonstrating these behavioral differences appears in \Cref{fig:example_exp2_run}.

\subsection{Analysis of Safety Layer Interventions}

To further characterize system behavior, \Cref{fig:micro_results} reports the distribution of safety layer intervention levels across all routes in Exp.~2, where the impact of the M2S mechanism is most pronounced. As expected, M2S-enabled \theterm significantly reduces the number of emergency-braking events relative to \ps, replacing them with less conservative throttle-release actions. This shift reflects the safety layer's ability to correctly identify faulty obstacles as lying outside the ego lane, enabled by the admissible use of ML-layer lane information.

\begin{table}[t]
    \centering
    \caption{
    Collision rate (CR), time to completion (TC), and route completion rate (RCR).
    TC is reported only for runs that finish the full route successfully.
    % , so no TC is shown when all agents terminate early (Exp.~1).
    % In Exp.~2, the M2S-enabled variant maintains zero collisions while completing most routes, with only a modest increase in TC relative to the ML baseline.
    }
    \label{tab:macro_results}

    \renewcommand{\arraystretch}{1.2}

    \begin{tabularx}{\columnwidth}{
        >{\raggedright\arraybackslash}X
        >{\centering\arraybackslash}X
        >{\centering\arraybackslash}X
        >{\centering\arraybackslash}X
        >{\centering\arraybackslash}X
        >{\centering\arraybackslash}X
        >{\centering\arraybackslash}X
    }
        \hline
        % Top header row
        \multirow{2}{*}{\textbf{Agent}}
        & \multicolumn{3}{c}{\textbf{Exp. 1: In-Lane Obstacle}} 
        & \multicolumn{3}{c}{\textbf{Exp. 2: Near-Lane Obstacle}} \\
        \cmidrule(lr){2-4} \cmidrule(lr){5-7}
        % Subheader row
        & \textbf{CR} & \textbf{TC, s} & \textbf{RCR}
        & \textbf{CR} & \textbf{TC, s} & \textbf{RCR} \\
        \hline

        \emph{ML}        & 1.0 & N/A    & 0.0   & 0.0 & 10.53   & 1.0 \\
        \emph{\ps}       & 0.0 & N/A    & 0.0   & 0.0 & 15.30   & 0.1 \\
        % \emph{SS(S2M)}   & 0.0 & N/A    & 0.0   & 0.0 & N/A     & 0.0 \\
        \emph{SS(M2S)}   & 0.0 & N/A    & 0.0   & 0.0 & 15.14   & 0.7 \\
        % \emph{\theterm}  & 0.0 & N/A    & 0.0   & 0.0 & 14.41   & 0.7 \\
        \hline
    \end{tabularx}
\end{table}

\subsection{S2M in NuPlan}
\label{sec:results:s2m_in_nuplan}

To evaluate the S2M link in a setting where planner-level constraint injection is supported, we conduct an auxiliary study in the nuPlan simulation environment~\cite{nuplan}. Using the same obstacle-existence fault model as in Experiment~1, the underlying nuPlan policy occasionally initiates a lane change sufficiently early, \ie before entering the safety layer's braking zone. 
In such cases, the S2M link provides safety-verified constraints in time for the planner to adjust its trajectory without triggering an emergency stop, resulting in $\text{RCR} = 74\%$ (29/39). This contrasts with the CARLA-based Experiment~1, where all runs terminate early due to late or absent replanning.

This result indicates that S2M effectiveness depends on planner responsiveness and can improve task completion while preserving safety when timely replanning is available.
\section{Conclusion}
\label{sec:conclusion}

% This paper introduced \thetermlong (\theterm), a generalization of the Simplex paradigm that enables bidirectional interaction between ML and safety layers. Unlike \ps, which relies on unilateral overrides, \theterm incorporates ML-to-safety and safety-to-ML communication to preserve deterministic safety guarantees while reducing conservativeness. Our analysis and experiments show that \theterm \textbf{retains the safety properties of \ps} while \textbf{improving performance} through coordinated use of ML outputs and safety feedback.

% These results illustrate that high performance and strong safety guarantees need not be competing objectives. 
% \theterm offers a principled foundation for scalable, verifiable runtime-assured autonomy, enabling formally unverified information to be safely leveraged for improved performance without weakening formal safety guarantees.
% This shows that formally unverified information can be incorporated into a safety architecture only under rigorously defined structural conditions, preserving the deterministic guarantees that historically motivated the strict separation between mission and safety layers.

This paper introduced \thetermlong (\theterm), a generalization of the Simplex paradigm that enables bidirectional interaction between ML and safety layers. Unlike \ps, which relies on unilateral overrides, \theterm incorporates ML-to-safety and safety-to-ML communication to preserve deterministic safety guarantees while reducing conservativeness. Our analysis and experiments show that \theterm \textbf{retains the safety properties of \ps} while \textbf{improving performance} through coordinated use of ML outputs and safety feedback. These results demonstrate that high performance and strong safety guarantees need not be competing objectives: \theterm provides a principled foundation for incorporating formally unverified information into safety-critical systems under rigorously defined structural conditions, challenging the traditional strict separation between mission and safety layers without weakening formal guarantees.

\bibliographystyle{IEEEtran}
\bibliography{references}

@inproceedings{bansal2022verifiable,
  author       = {Bansal, Ayoosh and Kim, Hunmin and Yu, Simon and Li, Bo and Hovakimyan, Naira and Caccamo, Marco and Sha, Lui},
  booktitle    = {2022 IEEE 33rd International Symposium on Software Reliability Engineering (ISSRE)},
  organization = {IEEE},
  pages        = {61--72},
  title        = {Verifiable Obstacle Detection},
  year         = {2022}
}

@article{chen2021lidar,
  author  = {Chen, Jiyang and Yu, Simon and Tabish, Rohan and Bansal, Ayoosh and Liu, Shengzhong and Abdelzaher, Tarek and Sha, Lui},
  journal = {arXiv preprint arXiv:2111.09799},
  title   = {Lidar cluster first and camera inference later: A new perspective towards autonomous driving},
  year    = {2021}
}

@inproceedings{liu2020removing,
  author       = {Liu, Shengzhong and Yao, Shuochao and Fu, Xinzhe and Tabish, Rohan and Yu, Simon and Bansal, Ayoosh and Yun, Heechul and Sha, Lui and Abdelzaher, Tarek},
  booktitle    = {2020 IEEE Real-Time Systems Symposium (RTSS)},
  organization = {IEEE},
  pages        = {319--332},
  title        = {On removing algorithmic priority inversion from mission-critical machine inference pipelines},
  year         = {2020}
}

@article{liu2021real,
  author    = {Liu, Shengzhong and Yao, Shuochao and Fu, Xinzhe and Shao, Huajie and Tabish, Rohan and Yu, Simon and Bansal, Ayoosh and Yun, Heechul and Sha, Lui and Abdelzaher, Tarek},
  journal   = {IEEE Transactions on Computers},
  number    = {8},
  pages     = {1770--1783},
  publisher = {IEEE},
  title     = {Real-time task scheduling for machine perception in intelligent cyber-physical systems},
  volume    = {71},
  year      = {2021}
}

@article{perception_simplex,
  author = {Bansal, Ayoosh and Kim, Hunmin and Yu, Simon and Li, Bo and Hovakimyan, Naira and Caccamo, Marco and Sha, Lui},
  journal = {Software Testing, Verification and Reliability},
  pages = {e1879},
  publisher = {Wiley Online Library},
  title = {Perception simplex: Verifiable collision avoidance in autonomous vehicles amidst obstacle detection faults},
  year = {2024},
  doi = {10.1002/stvr.1879},
  url = {https://doi.org/10.1002/stvr.1879},
  eprint = {https://doi.org/10.1002/stvr.1879},
  bibtex_show = true,
}

@article{simplex_original,
  author    = {Sha, Lui},
  journal   = {IEEE Software},
  number    = {4},
  pages     = {20--28},
  publisher = {Citeseer},
  title     = {Using simplicity to control complexity},
  volume    = {18},
  year      = {2001}
}

@inproceedings{simplex1,
  author    = {Tanya L. Crenshaw and
               Elsa L. Gunter and
               Craig L. Robinson and
               Lui Sha and
               P. R. Kumar},
  bibsource = {dblp computer science bibliography, https://dblp.org},
  booktitle = {Proceedings of the 28th {IEEE} Real-Time Systems Symposium {(RTSS}
               2007), 3-6 December 2007, Tucson, Arizona, {USA}},
  doi       = {10.1109/RTSS.2007.34},
  pages     = {400--412},
  title     = {The Simplex Reference Model: Limiting Fault-Propagation Due to Unreliable
               Components in Cyber-Physical System Architectures},
  url       = {https://doi.org/10.1109/RTSS.2007.34},
  year      = {2007}
}

@inproceedings{simplex2,
  address   = {San Francisco, USA},
  author    = {Stanley Bak and
               Deepti K. Chivukula and
               Olugbemiga Adekunle and
               Mu Sun and
               Marco Caccamo and
               Lui Sha},
  booktitle = {15th {IEEE} Real-Time and Embedded Technology and Applications Symposium},
  pages     = {99--107},
  title     = {The System-Level Simplex Architecture for Improved Real-Time Embedded
               System Safety},
  year      = {2009}
}

@inproceedings{hu2021exploring,
  author       = {Hu, Yigong and Liu, Shengzhong and Abdelzaher, Tarek and Wigness, Maggie and David, Philip},
  booktitle    = {2021 IEEE 27th International Conference on Embedded and Real-Time Computing Systems and Applications (RTCSA)},
  organization = {IEEE},
  pages        = {169--178},
  title        = {On exploring image resizing for optimizing criticality-based machine perception},
  year         = {2021}
}

@article{hu2022real,
  author    = {Hu, Yigong and Liu, Shengzhong and Abdelzaher, Tarek and Wigness, Maggie and David, Philip},
  journal   = {Real-Time Systems},
  pages     = {1--26},
  publisher = {Springer},
  title     = {Real-time task scheduling with image resizing for criticality-based machine perception},
  year      = {2022}
}

@inproceedings{kang2022dnn,
  author       = {Kang, Woosung and Chung, Siwoo and Kim, Jeremy Yuhyun and Lee, Youngmoon and Lee, Kilho and Lee, Jinkyu and Shin, Kang G and Chwa, Hoon Sung},
  booktitle    = {2022 IEEE 28th Real-Time and Embedded Technology and Applications Symposium (RTAS)},
  organization = {IEEE},
  pages        = {160--172},
  title        = {DNN-SAM: Split-and-Merge DNN Execution for Real-Time Object Detection},
  year         = {2022}
}

@inproceedings{liu2022self,
  author       = {Liu, Shengzhong and Fu, Xinzhe and Wigness, Maggie and David, Philip and Yao, Shuochao and Sha, Lui and Abdelzaher, Tarek},
  booktitle    = {2022 IEEE 28th Real-Time and Embedded Technology and Applications Symposium (RTAS)},
  organization = {IEEE},
  pages        = {173--186},
  title        = {Self-cueing real-time attention scheduling in criticality-aware visual machine perception},
  year         = {2022}
}

@inproceedings{tung2022irrelevant,
  author       = {Tung, Caleb and Goel, Abhinav and Hu, Xiao and Eliopoulos, Nick and Amobi, Emmanuel S and Thiruvathukal, George K and Chaudhary, Vipin and Lu, Yung-Hsiang},
  booktitle    = {2022 IEEE 4th International Conference on Artificial Intelligence Circuits and Systems (AICAS)},
  organization = {IEEE},
  pages        = {340--343},
  title        = {Irrelevant Pixels are Everywhere: Find and Exclude Them for More Efficient Computer Vision},
  year         = {2022}
}

@misc{ishii1986fault,
  author    = {Ishii, Kazuhiko and Noguchi, Atomi and Gotoh, Yoshimi},
  month     = apr # {~15},
  note      = {{US Patent 4,583,224}},
  publisher = {Google Patents},
  title     = {Fault tolerable redundancy control},
  year      = {1986}
}

@article{lala1994architectural,
  author    = {Lala, Jaynarayan H and Harper, Richard E},
  journal   = {Proceedings of the IEEE},
  number    = {1},
  pages     = {25--40},
  publisher = {IEEE},
  title     = {Architectural principles for safety-critical real-time applications},
  volume    = {82},
  year      = {1994}
}

@inproceedings{jha2019ml,
  author       = {Jha, Saurabh and Banerjee, Subho and Tsai, Timothy and Hari, Siva KS and Sullivan, Michael B and Kalbarczyk, Zbigniew T and Keckler, Stephen W and Iyer, Ravishankar K},
  booktitle    = {2019 49th Annual IEEE/IFIP International Conference on Dependable Systems and Networks (DSN)},
  organization = {IEEE},
  pages        = {112--124},
  title        = {ML-based Fault Injection for Autonomous Vehicles: A Case for Bayesian Fault Injection},
  year         = {2019}
}

@inproceedings{6629559,
  author    = {J. {Wei} and J. M. {Snider} and J. {Kim} and J. M. {Dolan} and R. {Rajkumar} and B. {Litkouhi}},
  booktitle = {2013 IEEE Intelligent Vehicles Symposium (IV)},
  doi       = {10.1109/IVS.2013.6629559},
  issn      = {1931-0587},
  month     = {June},
  number    = {},
  pages     = {763-770},
  title     = {Towards a viable autonomous driving research platform},
  volume    = {},
  year      = {2013}
}

@inproceedings{wiersma2017safety,
  author       = {Wiersma, Nils and Pareja, Ramiro},
  booktitle    = {2017 Workshop on Fault Diagnosis and Tolerance in Cryptography (FDTC)},
  organization = {IEEE},
  pages        = {9--16},
  title        = {Safety!= security: On the resilience of ASIL-D certified microcontrollers against fault injection attacks},
  year         = {2017}
}

@book{blanke2006diagnosis,
  author    = {Blanke, Mogens and Kinnaert, Michel and Lunze, Jan and Staroswiecki, Marcel and Schr{\"o}der, Jochen},
  publisher = {Springer},
  title     = {Diagnosis and fault-tolerant control},
  volume    = {2},
  year      = {2006}
}

@article{wang2018rsimplex,
	title={RSimplex: A robust control architecture for cyber and physical failures},
	author={Wang, Xiaofeng and Hovakimyan, Naira and Sha, Lui},
	journal={ACM Transactions on Cyber-Physical Systems},
	volume={2},
	number={4},
	pages={1--26},
	year={2018},
	publisher={ACM New York, NY, USA}
}

@article{mao2023sl1,
  title={SL1-Simplex: Safe Velocity Regulation of Self-Driving Vehicles in Dynamic and Unforeseen Environments},
  author={Mao, Yanbing and Gu, Yuliang and Hovakimyan, Naira and Sha, Lui and Voulgaris, Petros},
  journal={ACM Transactions on Cyber-Physical Systems},
  volume={7},
  number={1},
  pages={1--24},
  year={2023},
  publisher={ACM New York, NY}
}

@inproceedings{phan2020neural,
	title={Neural simplex architecture},
	author={Phan, Dung T and Grosu, Radu and Jansen, Nils and Paoletti, Nicola and Smolka, Scott A and Stoller, Scott D},
	booktitle={NASA Formal Methods Symposium},
	pages={97--114},
	year={2020},
	organization={Springer}
}

@inproceedings{musau2022using,
	title={On Using Real-Time Reachability for the Safety Assurance of Machine Learning Controllers},
	author={Musau, Patrick and Hamilton, Nathaniel and Lopez, Diego Manzanas and Robinette, Preston and Johnson, Taylor T},
	booktitle={2022 IEEE International Conference on Assured Autonomy (ICAA)},
	pages={1--10},
	year={2022},
	organization={IEEE}
}

@inproceedings{bogoslavskyi16iros,
  author       = {Bogoslavskyi, Igor and Stachniss, Cyrill},
  booktitle    = {2016 IEEE/RSJ International Conference on Intelligent Robots and Systems (IROS)},
  organization = {IEEE},
  pages        = {163--169},
  title        = {Fast range image-based segmentation of sparse 3D laser scans for online operation},
  year         = {2016}
}

@article{bogoslavskyi17pfg,
  author    = {Bogoslavskyi, Igor and Stachniss, Cyrill},
  journal   = {PFG--Journal of Photogrammetry, Remote Sensing and Geoinformation Science},
  number    = {1},
  pages     = {41--52},
  publisher = {Springer},
  title     = {Efficient online segmentation for sparse 3D laser scans},
  volume    = {85},
  year      = {2017}
}

@article{su2019one,
  title={One pixel attack for fooling deep neural networks},
  author={Su, Jiawei and Vargas, Danilo Vasconcellos and Sakurai, Kouichi},
  journal={IEEE Transactions on Evolutionary Computation},
  volume={23},
  number={5},
  pages={828--841},
  year={2019},
  publisher={IEEE}
}

@inproceedings{phan2017component,
  title={A component-based simplex architecture for high-assurance cyber-physical systems},
  author={Phan, Dung and Yang, Junxing and Clark, Matthew and Grosu, Radu and Schierman, John and Smolka, Scott and Stoller, Scott},
  booktitle={2017 17th International Conference on Application of Concurrency to System Design (ACSD)},
  pages={49--58},
  year={2017},
  organization={IEEE}
}

@inproceedings{mehmood2022black,
  title={The black-box simplex architecture for runtime assurance of autonomous CPS},
  author={Mehmood, Usama and Sheikhi, Sanaz and Bak, Stanley and Smolka, Scott A and Stoller, Scott D},
  booktitle={NASA formal methods symposium},
  pages={231--250},
  year={2022},
  organization={Springer}
}

@inproceedings{desai2019soter,
  title={SOTER: a runtime assurance framework for programming safe robotics systems},
  author={Desai, Ankush and Ghosh, Shromona and Seshia, Sanjit A and Shankar, Natarajan and Tiwari, Ashish},
  booktitle={2019 49th Annual IEEE/IFIP International Conference on Dependable Systems and Networks (DSN)},
  pages={138--150},
  year={2019},
  organization={IEEE}
}

@article{priority_inversion,
    title = {Priority Inheritance Protocols: An Approach to Real-Time Synchronization},
    author = {Lui Sha and Ragunathan Rajkumar and Lehoczky, \{John P.\}},
    year = "1990",
    month = sep,
    doi = "10.1109/12.57058",
    language = "English (US)",
    volume = "39",
    pages = "1175--1185",
    journal = "IEEE Transactions on Computers",
    issn = "0018-9340",
    publisher = "IEEE Computer Society",
    number = "9",
}

@inproceedings{Dosovitskiy17,
  title = { {CARLA}: {An} Open Urban Driving Simulator},
  author = {Alexey Dosovitskiy and German Ros and Felipe Codevilla and Antonio Lopez and Vladlen Koltun},
  booktitle = {Proceedings of the 1st Annual Conference on Robot Learning},
  pages = {1--16},
  year = {2017}
}

@InProceedings{Jaeger2023ICCV,
  title={Hidden Biases of End-to-End Driving Models},
  author={Bernhard Jaeger and Kashyap Chitta and Andreas Geiger},
  booktitle={Proc. of the IEEE International Conf. on Computer Vision (ICCV)},
  year={2023}
}

@ARTICLE{1335465,
  author={Avizienis, A. and Laprie, J.-C. and Randell, B. and Landwehr, C.},
  journal={IEEE Transactions on Dependable and Secure Computing}, 
  title={Basic concepts and taxonomy of dependable and secure computing}, 
  year={2004},
  volume={1},
  number={1},
  pages={11-33},
  doi={10.1109/TDSC.2004.2}
}

@misc{NHTSA_SGO_Reports,
  author = {{National Highway Traffic Safety Administration (NHTSA)}},
  title = {{Standing General Order on Crash Reporting for Automated Driving Systems and Level 2 Advanced Driver Assistance Systems}},
  howpublished = {\url{https://www.nhtsa.gov/laws-regulations/standing-general-order-crash-reporting}},
  note = {Regularly updated reports tracking incidents involving Automated Driving Systems},
  year = {2021--Present},
  urldate = {2025-11-13}
}

@INPROCEEDINGS{nuplan, 
  title={NuPlan: A closed-loop ML-based planning benchmark for autonomous vehicles},
  author={Caesar, Holger and Kabzan, Juraj and Tan, Kok Seang and Fong, Whye Kit and Wolff, Eric and Lang, Alex and Fletcher, Luke and Beijbom, Oscar and Omari, Sammy},
  booktitle={CVPR ADP3 workshop},
  year=2021
}

@inproceedings{fuller2020run,
  title={Run-time assurance: A rising technology},
  author={Fuller, Justin G},
  booktitle={2020 AIAA/IEEE 39th Digital Avionics Systems Conference (DASC)},
  pages={1--9},
  year={2020},
  organization={IEEE}
}

@article{paterson2025safety,
  title={Safety assurance of Machine Learning for autonomous systems},
  author={Paterson, Colin and Hawkins, Richard and Picardi, Chiara and Jia, Yan and Calinescu, Radu and Habli, Ibrahim},
  journal={Reliability Engineering \& System Safety},
  pages={111311},
  year={2025},
  publisher={Elsevier}
}

@article{araujo2024road,
  title={The road to safety: A review of uncertainty and applications to autonomous driving perception},
  author={Araujo, Bernardo and Teixeira, Joao F and Fonseca, Joaquim and Cerqueira, Ricardo and Beco, Sofia C},
  journal={Entropy},
  volume={26},
  number={8},
  pages={634},
  year={2024}
}

@article{xu2021machine,
  title={Machine learning for reliability engineering and safety applications: Review of current status and future opportunities},
  author={Xu, Zhaoyi and Saleh, Joseph Homer},
  journal={Reliability Engineering \& System Safety},
  volume={211},
  pages={107530},
  year={2021},
  publisher={Elsevier}
}

@article{mohseni2022taxonomy,
  title={Taxonomy of machine learning safety: A survey and primer},
  author={Mohseni, Sina and Wang, Haotao and Xiao, Chaowei and Yu, Zhiding and Wang, Zhangyang and Yadawa, Jay},
  journal={ACM Computing Surveys},
  volume={55},
  number={8},
  pages={1--38},
  year={2022},
  publisher={ACM New York, NY}
}

@inproceedings{jenn2020identifying,
  title={Identifying challenges to the certification of machine learning for safety critical systems},
  author={Jenn, Eric and Albore, Alexandre and Mamalet, Franck and Flandin, Gr{\'e}gory and Gabreau, Christophe and Delseny, Herv{\'e} and Gauffriau, Adrien and Bonnin, Hugues and Alecu, Lucian and Pirard, J{\'e}r{\'e}my and others},
  booktitle={European congress on embedded real time systems (ERTS 2020)},
  volume={1},
  year={2020}
}

@misc{crash_ntsb_tesla_2016_5_7,
  author       = {{National Transportation Safety Board}},
  howpublished = {Accident Report NTSB/HAR-17/02 PB2017-102600},
  month        = {September},
  title        = {{Collision Between a Car Operating With Automated Vehicle Control Systems and a Tractor-Semitrailer Truck Near Williston, Florida, May 7, 2016}},
  url          = {https://www.ntsb.gov/investigations/accidentreports/reports/har1702.pdf},
  year         = {2017}
}

@misc{crash_ntsb_tesla_2019_3_1,
  author       = {{National Transportation Safety Board}},
  howpublished = {Highway Accident Brief, Accident Number: HWY19FH008},
  month        = {March},
  title        = {Highway Accident Brief {HWY19FH008}},
  url          = {https://www.ntsb.gov/investigations/AccidentReports/Reports/HAB2001.pdf},
  year         = {2019}
}

@misc{crash_ntsb_uber_2018_3_18,
  author       = {{National Transportation Safety Board}},
  howpublished = {Accident Report NTSB/HAR-19/03 PB2019-101402},
  title        = {{Collision Between Vehicle Controlled by Developmental Automated Driving System and Pedestrian Tempe, Arizona, March 18, 2018}},
  url          = {https://www.ntsb.gov/investigations/AccidentReports/Reports/HAR1903.pdf},
  year         = {2019}
}

@article{pereira2020challenges,
  author    = {Pereira, Ana and Thomas, Carsten},
  journal   = {Machine Learning and Knowledge Extraction},
  number    = {4},
  pages     = {579--602},
  publisher = {MDPI},
  title     = {Challenges of machine learning applied to safety-critical cyber-physical systems},
  volume    = {2},
  year      = {2020}
}

@inproceedings{goodloe2022assuring,
  author       = {Goodloe, Alwyn E},
  booktitle    = {2022 IEEE International Symposium on Software Reliability Engineering Workshops (ISSREW)},
  organization = {IEEE},
  pages        = {326--332},
  title        = {Assuring Safety-Critical Machine Learning Enabled Systems: Challenges and Promise},
  year         = {2022}
}

@article{malleswaran2023challenges,
  author = {Malleswaran, Iswarya and Dinakaran, Shruthi},
  title  = {Challenges in Specifying Safety-Critical Systems with AI-Components},
  year   = {2023},
  url = {https://odr.chalmers.se/items/d130c7f0-55a1-454c-acbc-1caf59a8e0e1},
}

@incollection{braiek2025machine,
  author    = {Braiek, Houssem Ben and Khomh, Foutse},
  booktitle = {Trustworthy AI in Medical Imaging},
  pages     = {37--71},
  publisher = {Elsevier},
  title     = {Machine learning robustness: A primer},
  year      = {2025}
}

@article{zhang2023deep,
  title={Deep long-tailed learning: A survey},
  author={Zhang, Yifan and Kang, Bingyi and Hooi, Bryan and Yan, Shuicheng and Feng, Jiashi},
  journal={IEEE transactions on pattern analysis and machine intelligence},
  volume={45},
  number={9},
  pages={10795--10816},
  year={2023},
  publisher={IEEE}
}

@techreport{ISO26262,
  author       = {{ISO/TC 22/SC 32}},
  title        = {{Road vehicles -- Functional safety}},
  institution  = {International Organization for Standardization},
  number       = {ISO 26262:2018},
  year         = {2018},
  address      = {Geneva, Switzerland},
  type         = {Standard}
}

@techreport{DO178C,
  title        = {{DO-178C: Software Considerations in Airborne Systems and Equipment Certification}},
  institution  = {RTCA, Inc.},
  year         = {2011},
  address      = {Washington, D.C.},
  number       = {RTCA/DO-178C},
  type         = {Standard}
}

@inproceedings{NIPS2017_2650d608,
 author = {Kendall, Alex and Gal, Yarin},
 booktitle = {Advances in Neural Information Processing Systems},
 editor = {I. Guyon and U. Von Luxburg and S. Bengio and H. Wallach and R. Fergus and S. Vishwanathan and R. Garnett},
 pages = {},
 publisher = {Curran Associates, Inc.},
 title = {What Uncertainties Do We Need in Bayesian Deep Learning for Computer Vision?},
 url = {https://proceedings.neurips.cc/paper_files/paper/2017/file/2650d6089a6d640c5e85b2b88265dc2b-Paper.pdf},
 volume = {30},
 year = {2017}
}

@inproceedings{NIPS2017_9ef2ed4b,
 author = {Lakshminarayanan, Balaji and Pritzel, Alexander and Blundell, Charles},
 booktitle = {Advances in Neural Information Processing Systems},
 editor = {I. Guyon and U. Von Luxburg and S. Bengio and H. Wallach and R. Fergus and S. Vishwanathan and R. Garnett},
 pages = {},
 publisher = {Curran Associates, Inc.},
 title = {Simple and Scalable Predictive Uncertainty Estimation using Deep Ensembles},
 url = {https://proceedings.neurips.cc/paper_files/paper/2017/file/9ef2ed4b7fd2c810847ffa5fa85bce38-Paper.pdf},
 volume = {30},
 year = {2017}
}

@Article{e26080634,
AUTHOR = {Araújo, Bernardo and Teixeira, João F. and Fonseca, Joaquim and Cerqueira, Ricardo and Beco, Sofia C.},
TITLE = {The Road to Safety: A Review of Uncertainty and Applications to Autonomous Driving Perception},
JOURNAL = {Entropy},
VOLUME = {26},
YEAR = {2024},
NUMBER = {8},
ARTICLE-NUMBER = {634},
URL = {https://www.mdpi.com/1099-4300/26/8/634},
PubMedID = {39202104},
ISSN = {1099-4300},
DOI = {10.3390/e26080634}
}

@inproceedings{katz2017reluplex,
  title={Reluplex: An efficient SMT solver for verifying deep neural networks},
  author={Katz, Guy and Barrett, Clark and Dill, David L and Julian, Kyle and Kochenderfer, Mykel J},
  booktitle={International conference on computer aided verification},
  pages={97--117},
  year={2017},
  organization={Springer}
}

@inproceedings{huang2017safety,
  title={Safety verification of deep neural networks},
  author={Huang, Xiaowei and Kwiatkowska, Marta and Wang, Sen and Wu, Min},
  booktitle={International conference on computer aided verification},
  pages={3--29},
  year={2017},
  organization={Springer}
}

% \section*{Acknowledgment}

% The preferred spelling of the word ``acknowledgment'' in America is without 
% an ``e'' after the ``g''. Avoid the stilted expression ``one of us (R. B. 
% G.) thanks $\ldots$''. Instead, try ``R. B. G. thanks$\ldots$''. Put sponsor 
% acknowledgments in the unnumbered footnote on the first page.

\end{document}